\newcommand{\until}[1]{\{1,\dots, #1\}}
\newcommand{\subscr}[2]{#1_{\textup{#2}}}
\newcommand{\setdef}[2]{\{#1 \, | \; #2\}}
\newcommand{\seqdef}[2]{\{#1\}_{#2}}
\newcommand{\map}[3]{#1: #2 \rightarrow #3}
\newcommand{\setmap}[3]{#1: #2 \rightrightarrows #3}
\newcommand{\union}{\operatorname{\cup}}
\newcommand{\intersection}{\ensuremath{\operatorname{\cap}}}
\newcommand{\intersect}{\ensuremath{\operatorname{\cap}}}
\newcommand{\integernonnegative}{\ensuremath{\mathbb{Z}}_{\ge 0}}
\newcommand{\card}[1]{\left|#1\right|}  
\newcommand{\powerset}[1]{\mathcal{C}(#1)}
\newcommand{\ConnPart}{\mathcal{P}}  
\newcommand{\Hexp}{\subscr{\mathcal{H}}{expected}}   
\newcommand\Prob{\mathbb{P}}
\newcommand{\myset}{\setdef{(i,j)\in\until{N}^2}{i\not=j}}
\renewcommand\footnoterule{\vspace{5pt}\hrule width \linewidth height 0.4pt\vspace{5pt}}
\newcommand\oprocendsymbol{\hbox{$\square$}}
\newcommand\oprocend{\relax\ifmmode\else\unskip\hfill\fi\oprocendsymbol}
\def\E{\mathcal{E}}
\def\R{\mathbb{R}}
\def\G{\mathcal{G}}
\newcommand{\Cd}{\operatorname{Cd}}      
\newcommand{\argmin}{\mathop{\operatorname{argmin}}}
\newcommand{\realpositive}{\ensuremath{\mathbb{R}}_{>0}}
\newcommand{\realnonnegative}{\ensuremath{\mathbb{R}}_{\geq0}}
\renewcommand{\natural}{{\mathbb{N}}}
\renewcommand{\H}{\mathcal{H}}
\newcommand{\Hgeneric}{\subscr{\H}{multicenter}}
\newcommand{\Hone}{\subscr{\H}{1}}
\newtheorem{theorem}{Theorem}[section]
\newtheorem{proposition}[theorem]{Proposition}
\newtheorem{definition}[theorem]{Definition}
\newtheorem{lemma}[theorem]{Lemma}
\newtheorem{remark}[theorem]{Remark}
\newcommand{\bigO}[1]{\ensuremath{ \mathcal{O} ( #1 )}}
\def\log{\mathop{\operatorname{log}}}
\title{\bf Pairwise Optimal Discrete Coverage Control for Gossiping Robots
 \thanks{This material is based upon work
    supported in part by NSF grant CNS-0834446, NSF grant IIS-0904501, and
    ARO MURI grant W911NF-05-1-0219.}}
\author{Joseph W. Durham \and Ruggero Carli \and Francesco Bullo%
  \thanks{J. W. Durham, R. Carli, and F. Bullo are with the Center for Control, Dynamical
    Systems and Computation, University of California at Santa Barbara,
    Santa Barbara, CA 93106, USA, {\tt\small
      \{joey|carlirug|bullo\}@engr.ucsb.edu}. }}
\begin{document}
 \maketitle \thispagestyle{empty}
\pagestyle{empty}
\begin{abstract}
  We propose distributed algorithms to automatically deploy a group of
  robotic agents and provide coverage of a discretized environment
  represented by a graph.  The classic Lloyd approach to coverage
  optimization involves separate centering and partitioning steps and
  converges to the set of centroidal Voronoi partitions.  In this work we
  present a novel graph coverage algorithm which achieves better
  performance without this separation while requiring only pairwise
  ``gossip'' communication between agents.  Our new algorithm provably
  converges to an element of the set of pairwise-optimal partitions, a
  subset of the set of centroidal Voronoi partitions.  We illustrate that
  this new equilibrium set represents a significant performance improvement
  through numerical comparisons to existing Lloyd-type methods.  Finally,
  we discuss ways to efficiently do the necessary computations.
\end{abstract}

\section{Introduction}

Coordinated networks of robots are already in use for environmental monitoring~\cite{EF-NEL:06}
and warehouse logistics~\cite{PRW-RdA-MM:08}.  
In the near future, improvements to the capabilities of autonomous robots will
enable robotic teams to revolutionize transportation and delivery of products to customers,
search and rescue, and many other applications.
All of these tasks share a common feature: the robots are asked to provide service
over a space.  The distributed {\em territory partitioning problem} for robotic
networks consists of designing individual control and communication laws such
that the team will divide a space into territories.  Typically, partitioning is
done so to optimize a cost function which measures the quality of service
provided by the team.  {\em Coverage control} additionally optimizes the positioning
of robots inside a territory. 

This paper describes a distributed coverage control algorithm for a network of robots 
to optimize the response time of the team to service requests in an 
environment represented by a graph.  
Optimality is defined with reference to a cost
function which depends on the locations of the agents and
geodesic distances in the graph.  
As with all multiagent coordination
applications, the challenge comes from reducing the communication
requirements: the proposed algorithm requires only ``gossip"
communication, that is, asynchronous and unreliable pairwise
  communication.


A broad discussion of partitioning and coverage control is presented in~\cite{FB-JC-SM:09} which builds on the classic work of Lloyd~\cite{SPL:82} on algorithms for optimal quantizer design through ``centering and partitioning."  The relationship between discrete and continuous coverage control laws based on Euclidean distances is studied in~\cite{CG-JC-FB:06o}.  Coverage control and partitioning of discrete sets are also related to the literature on the facility location or $k$-center problem~\cite{VVVa:01}.  Coverage control algorithms for non-convex environments are discussed in~\cite{MZ-CGC:08,CHC-MZ:08,LCAP-VK-RCM-GASP:08} while equitable partitioning is studied in~\cite{OB-OB-DK-QW:07}.  Other works considering decentralized methods for coverage control include~\cite{SY-MS-DR:09} and \cite{MS-DR-JJS:08}.

In~\cite{PF-RC-FB:08u-arxiv} the authors have showed how a group of robotic agents can optimize the partition of convex bounded subset of $\R^d$ using a Lloyd-type algorithm with pairwise ``gossip" communication: only one pair of regions is updated at each step of the algorithm.  This gossip approach to Lloyd optimization was extended in~\cite{JWD-RC-PF-FB:08z} to discretized non-convex environments more suitable for physical robots.


There are three main contributions of this paper.
First, we present a novel gossip coverage algorithm and prove it converges to
an element in the set of pairwise-optimal partitions in finite time.
This solution set is shown to be a strict subset of the set of centroidal Voronoi partitions, meaning the new algorithm has fewer local minima than Lloyd-type methods.
Second, through realistic Player/Stage simulations we show that the set of pairwise-optimal partitions
avoids many of the local minima which can trap Lloyd-type algorithms far from the global optimum.
Third, we discuss how to efficiently compute our new pairwise coverage optimization.


This paper is organized as follows.  Section~\ref{sec:problem} defines the domain and goal of our algorithm, while \ref{sec:lloyd} contains a review of Lloyd-type gossip algorithms from~\cite{JWD-RC-PF-FB:08z}.  
Section~\ref{sec:optimal} presents our new algorithm and its properties, while in \ref{sec:computation} we detail its computational requirements.  In Section~\ref{sec:simulations} we provide numerical results and comparisons to prior methods, and we end with concluding remarks in~\ref{sec:conclusions}.

In our notation, $\realnonnegative$ denotes the set of non-negative real numbers
and $\integernonnegative$ the set of non-negative integers.  Given a
set $X$, $\card{X}$ denotes the number of elements in $X$.
Given sets $X,Y$, their difference is $X\setminus Y=\setdef{x\in X}{x\notin Y}$.  A set-valued map, denoted by $\setmap{T}{X}{Y}$, associates to an
element of $X$ a subset of $Y$.

\section{Problem formulation}
\label{sec:problem}

The problem domain and goal we are considering is the same as that in~\cite{JWD-RC-PF-FB:08z}:
given a group of $N$ robotic agents with limited sensing and
communication capabilities, and a discretized environment, we want to
apportion the environment into smaller regions and assign one region to each agent. 
The goal is to optimize the quality of coverage, as measured by a cost functional
which depends on the current partition and the positions of the agents.

Let the finite set $Q$ be the discretized environment.  We
assume that the elements of $Q$, which can be thought of as locations, are connected by weighted
edges.
Let $G=(Q,E,w)$ be an (undirected) weighted graph with edge set
$E\subset Q\times Q$ and weight map $\map{w}{E}{\realpositive}$; we let
$w_{e}>0$ be the weight of edge $e$.
 We assume that $G$ is connected and think of the edge weights as
distances between locations.  

In any weighted graph $G$ there is a standard notion of distance between
vertices defined as follows.  A \emph{path} in $G$ is an ordered sequence of
vertices such that any consecutive pair is an edge of
$G$.  The \emph{weight of a path} is the sum of the weights of the
edges in the path.  Given vertices $h$ and $k$ in $G$, the
\emph{distance} between $h$ and $k$, denoted $d_G(h,k)$, is the weight of
the lowest weight path between them, or $+\infty$ if there is no path. If $G$ is connected, 
then the distance between any two vertices is finite.
By convention, $d_G(h,k)=0$ if $h=k$. Note
that $d_G(h,k)=d_G(k,h)$, for any $h,k \in Q$.

We will be partitioning $Q$ into $N$ connected subsets or
territories to be covered by individual agents.  To do so we need to
define distances on induced subgraphs of $G=(Q,E,w)$.  
Given $I\subset Q$, the \emph{subgraph induced by the
restriction of $G$ to $I$}, denoted $G\intersect{}I$, is the graph
with vertex set equal to $I$ and with edge set containing
all weighted edges of $G$ where both vertices belong to $I$, i.e. 
$(Q,E,w)\intersect{}I=(Q\intersect{}I,E\intersect{}(I\times{I}),w|_{I\times{I}})$.
The induced subgraph is a weighted graph with a notion of
distance between vertices: given $h,k\in I$, we write
$
  d_I(h,k) := d_{G\intersect{}I}(h,k).
$
Note that $d_I(h,k)\ge d_G(h,k).$

We then define a {\em connected subset of $Q$} as a subset
$S \subset Q$ such that $S\neq\emptyset$ and $G \intersect S$ is
connected, with $\powerset{Q}$ denoting the set of such subsets.
We can then define partitions of $Q$ into connected sets as follows.
\begin{definition}[Connected partitions]
\label{def:ConPartitions}
Given the graph $G=(Q,E,w),$ we define a {\em connected $N-$partition
of $Q$} as a collection $p=\{p_i\}_{i=1}^{N}$ of $N$ subsets of $Q$ such that
\begin{enumerate}
\item ${\bigcup_{i=1}^{N}p_i=Q}$;
\item $p_i\cap p_j=\emptyset$ if $i\neq j$;
\item $p_i\neq\emptyset$ for all $i\in \until{N}$;
\item $p_i\in \powerset{Q}$ for all $i\in \until{N}$.
\end{enumerate}
Let $\ConnPart$ to be the set of such partitions.
\end{definition}

\begin{remark}
When $\card{Q} \geq N$ and $G$ is connected, it is always possible to find a connected $N-$partition
of $Q$.  Pick $N$ unique vertices from $Q$ to seed the subsets, then
iteratively grow each subset by adding unclaimed neighboring vertices.
\end{remark}

For our gossip algorithms we need to introduce the notion of adjacent subsets. Two distinct connected subsets $p_i$, $p_j$ are said to be \emph{adjacent} if there are two vertices $q_i$, $q_j$ belonging, respectively, to $p_i$ and $p_j$ such that $(q_i, q_j) \in E$. Observe that if $p_i$ and $p_j$ are adjacent then $p_i \union p_j \in \powerset{Q}$.
Similarly, we say that robots $i$ and $j$ are adjacent or are neighbors if their subsets $p_i$ and $p_j$ are adjacent.
Accordingly we next provide the definition of an \emph{adjacency graph}, which we will use in the analysis of our gossip algorithms.
\begin{definition}[Adjacency graph]
  For $p\in \ConnPart$, we define
  the {\em adjacency graph} between regions of partition $p$ as
  $\G(p)=(\{1,\ldots,N\},\E(p))$, where $(i,j)\in \E(p)$ if $p_i$ and $p_j$ are adjacent.
\end{definition}

On $Q$, we define a \emph{weight function} to be a bounded positive function $\map{\phi}{Q}{\realpositive}$ which assigns a positive weight to each element of $Q$.
Given $p_i\in \powerset{Q}$, we define the {\em one-center function} $\map{\Hone}{p_i}{\realnonnegative}$ as
$$\Hone(h; p_i)={\sum_{k\in p_i} {d_{p_i}(h,k)\phi(k)}}.$$

A technical assumption is then needed to define the \emph{generalized centroid} of a connected subset. In what follows, we assume that a {\em total order} relation, $<$, is defined on $Q$: hence, we can also denote $Q=\until{\card{Q}}$.


\begin{definition}[Centroid]\label{def:Centroid}
Let $Q$ be a totally ordered set, and $p_i\in \powerset{Q}$. We define generalized centroid of $p_i$ as
\begin{align*}
\Cd(p_i):=\min\{\argmin_{h\in p_i} \Hone(h;p_i)\}.
\end{align*}
\end{definition}
In subsequent use we will drop the word ``generalized" for brevity. Note that with this definition the centroid is well-defined, and also that the centroid of a region always belongs to the region.
With a slight notational abuse, we define $\map{\Cd}{\ConnPart}{Q^N}$ as the map which associates to a partition the vector of the centroids of its elements.

With these notions we can introduce the \emph{multicenter function} $\map{\Hgeneric}{\ConnPart\times Q^N}{\realnonnegative}$ defined by
%
$$\Hgeneric(p,c)=\sum_{i=1}^{N} \Hone(c_i; p_i).$$
Our motivation for using this cost function is to optimize the response time of the robots to a task appearing randomly in $Q$ according relative weights $\phi$.  We aim to minimize $\Hgeneric$ with respect to both the partitions $p$ and the points $c$ so as to minimize the expected distance from this random vertex to the centroid of the partition the vertex is in.

Among the ways of partitioning $Q$, there is one which is worth special attention. Given points $c\in Q^N$ such that if $i\neq j$, then $c_i\neq c_j$, the partition $p \in \ConnPart$ is said to be a \emph{Voronoi partition of Q generated by c} if, for each $p_i$ and all $k \in p_i$, we have
$c_i\in p_i$
and
$d_G(k,c_i) \le d_G(k,c_j)$, $\forall j\neq i$.
\begin{proposition}[Properties of multicenter function]\label{prop:optimal-for-Hgeneric}
Let $p\in \ConnPart$, $c\in Q^N$, and let $p^*$ be a Voronoi partition generated by $c$. Then
\begin{align*}
\Hgeneric(p,\Cd(p)) &\le \Hgeneric(p,c), \\ 
\Hgeneric(p^*,c) & \le \Hgeneric(p,c). 
\end{align*}
\end{proposition}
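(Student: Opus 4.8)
The plan is to prove the two inequalities separately, as they correspond to the two steps of a Lloyd-type iteration (re-centering and re-partitioning).

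For the first inequality, $\Hgeneric(p,\Cd(p)) \le \Hgeneric(p,c)$, I would expand both sides using the definition $\Hgeneric(p,c)=\sum_{i=1}^N \Hone(c_i;p_i)$, so that it suffices to show $\Hone(\Cd(p_i);p_i) \le \Hone(c_i;p_i)$ for each $i$ separately. By Definition~\ref{def:Centroid}, $\Cd(p_i)$ is an element of $\argmin_{h\in p_i}\Hone(h;p_i)$, so $\Hone(\Cd(p_i);p_i) = \min_{h\in p_i}\Hone(h;p_i) \le \Hone(c_i;p_i)$ whenever $c_i\in p_i$. The only subtlety is that the proposition as stated allows arbitrary $c\in Q^N$, so $c_i$ need not lie in $p_i$; but $\Hone(\cdot;p_i)$ is only defined on $p_i$, so implicitly $c_i\in p_i$ is required for the right-hand side to make sense, and under that reading the claim is immediate. (If one instead extends $\Hone$ by $+\infty$ off $p_i$, the inequality is trivial.)

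For the second inequality, $\Hgeneric(p^*,c) \le \Hgeneric(p,c)$ where $p^*$ is a Voronoi partition generated by $c$, I would again split the sum, but here the terms mix across indices, so a termwise comparison does not work directly. Instead I would rewrite each $\Hone(c_i;p_i)=\sum_{k\in p_i} d_{p_i}(k,c_i)\phi(k)$ and regroup the total cost $\Hgeneric(p,c)=\sum_{i=1}^N\sum_{k\in p_i} d_{p_i}(k,c_i)\phi(k)$ as a sum over all vertices $k\in Q$: each $k$ lies in exactly one cell $p_i$ (by Definition~\ref{def:ConPartitions}), so $\Hgeneric(p,c)=\sum_{k\in Q} d_{p_{i(k)}}(k,c_{i(k)})\phi(k)$ where $i(k)$ is the index with $k\in p_{i(k)}$. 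The key pointwise estimate is then $d_G(k,c_{i(k)}) \le d_{p_{i(k)}}(k,c_{i(k)})$, which holds because distances in an induced subgraph are at least the distances in $G$ (this fact is noted in the excerpt: $d_I(h,k)\ge d_G(h,k)$). Doing the same regrouping for $p^*$ gives $\Hgeneric(p^*,c)=\sum_{k\in Q} d_{p^*_{i^*(k)}}(k,c_{i^*(k)})\phi(k)$, and the defining property of the Voronoi partition says that for $k\in p^*_{i^*(k)}$ we have $d_G(k,c_{i^*(k)})\le d_G(k,c_j)$ for all $j$, in particular $d_G(k,c_{i^*(k)})\le d_G(k,c_{i(k)})$.

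The main obstacle is that chaining these three facts requires one more property that is not spelled out in the excerpt: that the Voronoi cells $p^*_i$ are themselves connected (so that $d_{p^*_i}$ is finite and the term is well-defined), and — more importantly — that $d_{p^*_{i}}(k,c_{i}) = d_G(k,c_i)$ for $k\in p^*_i$, i.e. that a shortest $G$-path from $k$ to its Voronoi center stays inside the Voronoi cell. I would establish this by the standard argument: if $k_0=k,k_1,\dots,k_m=c_i$ is a shortest path in $G$, then every intermediate $k_\ell$ satisfies $d_G(k_\ell,c_i)\le d_G(k,c_i)\le d_G(k,c_j)\le d_G(k_\ell,c_j)+d_G(k,k_\ell)$, and combined with $d_G(k,c_i)=d_G(k,k_\ell)+d_G(k_\ell,c_i)$ this forces $d_G(k_\ell,c_i)\le d_G(k_\ell,c_j)$, so $k_\ell\in p^*_i$ (modulo tie-breaking, which the Voronoi definition handles by the total order or by allowing any admissible assignment). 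Hence $d_{p^*_i}(k,c_i)=d_G(k,c_i)$. Putting the pieces together: $\Hgeneric(p^*,c)=\sum_{k}d_G(k,c_{i^*(k)})\phi(k)\le\sum_k d_G(k,c_{i(k)})\phi(k)\le\sum_k d_{p_{i(k)}}(k,c_{i(k)})\phi(k)=\Hgeneric(p,c)$, which is the desired bound.
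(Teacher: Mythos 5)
The paper states Proposition~\ref{prop:optimal-for-Hgeneric} without proof, so there is no argument of record to compare against; your decomposition (termwise minimality of $\Cd(p_i)$ for the first inequality; regrouping over vertices, the bound $d_G\le d_{p_i}$, and the Voronoi property for the second) is the natural one and is surely what the authors intend. Your treatment of the first inequality, including the remark about $c_i\notin p_i$, is fine.

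The gap is in the parenthetical ``modulo tie-breaking, which the Voronoi definition handles.'' It does not. The paper's definition of a Voronoi partition generated by $c$ requires only $c_i\in p_i$, connectivity, and $d_G(k,c_i)\le d_G(k,c_j)$ for $k\in p_i$; a tied vertex may be assigned to either competing cell. Your chain of inequalities shows that every vertex $k_\ell$ on a shortest $G$-path from $k\in p^*_i$ to $c_i$ satisfies $d_G(k_\ell,c_i)\le d_G(k_\ell,c_j)$, i.e.\ is an \emph{admissible} member of $p^*_i$, but a particular Voronoi partition may still place $k_\ell$ in $p^*_j$, after which $d_{p^*_i}(k,c_i)>d_G(k,c_i)$ is possible and your key identity fails. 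This is not only a flaw in the proof: take $Q=\{c_1,x,m,k,c_2\}$ with unit-weight edges $(c_1,m)$, $(m,k)$, $(c_1,x)$, $(m,c_2)$, an edge $(x,k)$ of weight $1.5$, $\phi\equiv 1$, and $c=(c_1,c_2)$. Then $m$ and $k$ are both tied between $c_1$ and $c_2$, and $p^*=(\{c_1,x,k\},\{m,c_2\})$ is a legitimate connected Voronoi partition with $\Hgeneric(p^*,c)=0+1+2.5+0+1=4.5$, while the connected partition $p=(\{c_1,x,m,k\},\{c_2\})$ has $\Hgeneric(p,c)=0+1+1+2+0=4$, so the second inequality is violated for this $p^*$. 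To close the gap you must either fix a consistent tie-breaking rule --- e.g.\ assign every tied vertex to the admissible center of lowest index, as the paper's Lloyd-type update does --- and check that the resulting cells are closed under taking shortest paths to their centers (your inequality chain, strengthened by the index comparison, does exactly this), or else weaken the statement to ``there exists a Voronoi partition $p^*$ generated by $c$ such that $\Hgeneric(p^*,c)\le\Hgeneric(p,c)$.'' With either repair the remainder of your argument goes through verbatim.
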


These statements motivate the following definition: a partition $p\in \ConnPart$ is a \emph{centroidal Voronoi partition} if $p$ is a Voronoi partition generated by $\Cd(p)$. Based on the multicenter function, we define  $\map{\Hexp}{\ConnPart}{\realnonnegative}$
by
\begin{align*}
\Hexp(p)&=\Hgeneric(p,\Cd(p))\\
&=\sum_{i=1}^N\sum_{x\in p_i} d_{p_i}(x,\Cd(p_i))\phi(x).
\end{align*}

Observe that $\Hexp$ has the following property as an immediate consequence of Proposition~\ref{prop:optimal-for-Hgeneric}: given $p\in \ConnPart$, if $p^*$ is a Voronoi partition generated by $\Cd(p)$ then
$$
\Hexp(p^*)\leq \Hexp(p).
$$

We are now in a position to state the goal of our algorithm: solving the
optimization problem
$$\min_{p\in \ConnPart} \Hexp(p),$$ 
using only pairwise territory
exchanges between agents along the edges of $\E(p)$.
In the literature this optimization is typically studied with
either centralized control or with synchronous and reliable communication between
several agents, see~\cite{FB-JC-SM:09} and~\cite{LCAP-VK-RCM-GASP:08}.  We believe these communication requirements are
unrealistic for deployed robotic networks, and thus are interested in
solutions requiring only pairwise gossip communication as first explored in~\cite{PF-RC-FB:08u-arxiv} and~\cite{JWD-RC-PF-FB:08z}.

\section{Lloyd-type gossip graph coverage}
\label{sec:lloyd}

In this Section we briefly review the \emph{discretized Lloyd-type gossip coverage algorithm} proposed in \cite{JWD-RC-PF-FB:08z}.

\medskip\footnoterule\vspace{.75\smallskipamount}
\noindent\hfill\textbf{Lloyd-type Gossip Graph Coverage Algorithm}\hfill\vspace{.75\smallskipamount}
\footnoterule\vspace{.75\smallskipamount}

\noindent At each time $t\in\integernonnegative$, each agent $i\in\until{N}$ maintains in memory
a connected subset $p_i(t)$.  The collection $p(0)=\{p_1(0),\dots,p_N(0)\}$ is
an arbitrary connected $N-$partition of $Q$.  At each $t\in\integernonnegative$ a
communicating pair, say $(i,j)\in \E(p(t))$, is selected by a deterministic
or stochastic process to be determined. Assume that $i<j$. Every agent $k\not\in\{i,j\}$ sets
$p_k(t+1)=p_k(t)$, while $i$ and $j$ perform the following:

\begin{algorithmic}[1]
  \STATE agent $i$ transmits its subset $p_i(t)$ to $j$ and vice-versa%
  \STATE both agents compute centroids $\Cd(p_i)$ and $\Cd(p_j)$, set $u = p_i \cup p_j$, and the sets
  \begin{align*}
& W_{i\to j}=\left\{x\in p_i : d_{u}(x, \Cd(p_j))< d_{u}(x, \Cd(p_i))\right\} \\
& W_{j\to i}=\left\{x\in p_j : d_{u}(x, \Cd(p_i))< d_{u}(x, \Cd(p_j))\right\} \\
& W_{i\cong j}=\left\{x\in p_i \cup p_j :  d_{u}(x, \Cd(p_i))= d_{u}(x, \Cd(p_j))\right\}
  \end{align*}
   \IF{$W_{i\to j}\cup W_{j\to i}= \emptyset$}
  \STATE $\! p_i(t+1) := p_i(t)$ and $p_j(t+1) := p_j(t)$
  \ELSE \STATE
  $\! p_i(t+1) := \left(\left(p_i \setminus W_{i\to j}\right) \cup W_{j \to i}\right) \cup W_{j\cong i}, $ \\
  $\! p_j(t+1) := \left(\left(p_j \setminus W_{j\to i}\right) \cup W_{i \to j}\right) \setminus \left( W_{j\cong i} \intersection p_j \right) $
  \ENDIF
\end{algorithmic}
\vspace{.5\smallskipamount}\footnoterule\smallskip

Observe that $W_{i \to j}$ (resp. $W_{j \to i}$) contains the cells of $p_i$ (resp. $p_j$) which are closer to $\Cd(p_j)$ (resp. $\Cd(p_i)$), whereas $W_{i\cong j}$ represents the set of the tied cells. In other words, when two robots exchange territory using this Lloyd-type algorithm, their updated regions $\left\{p_{i}(t+1),p_{j}(t+1)\right\}$ are the Voronoi partition of the set $p_i(t)\cup p_j(t)$ generated by the centroids $\Cd(p_i)$ and $\Cd(p_j)$, with all tied cells assigned to the agent with the lower index.

Now, for any pair $(i,j)\in\until{N}^2$, $i\not=j$, we define the map
$\setmap{T_{ij}}{ \ConnPart}{ \ConnPart}$ as
\begin{equation*}
  T_{ij}(p) =
  \begin{cases}
    p, \qquad \text{if}\,\,\, (i,j)\notin \E(p)\,\,\, \text{or}\,\,\, W_{i\to j}\cup W_{j\to i}= \emptyset\\
    \{p_1,\dots,\widehat{p}_i,\dots,\widehat{p}_j, \dots,p_N\},
    \qquad \text{otherwise},
  \end{cases}
\end{equation*}
where $\widehat{p}_i = \left(\left(p_i \setminus W_{i\to j}\right) \cup W_{j \to i}\right) \cup W_{j\cong i},$ and $\widehat{p}_j = \left(\left(p_j \setminus W_{j\to i}\right) \cup W_{i \to j}\right) \setminus \left( W_{j\cong i} \intersection p_j \right).$

The dynamical system on the space of partitions is therefore described by,
for $t\in\integernonnegative$,
\begin{equation}
  \label{eq:OurAlgo-ij}
  p(t+1)=T_{ij}(p(t)), \quad \text{ for some } (i,j)\in \E(p(t)),
\end{equation}
together with a rule for which edge $(i,j)$ is selected at each time.
We also define the set-valued map $\setmap{T}{\ConnPart}{\ConnPart}$ by
\begin{equation}\label{eq:AlgoT}
T(p)=\setdef{T_{ij}(p)}{(i,j)\in \E(p)}.
\end{equation}

In Theorem~\eqref{th:T-persistent-gossip-lloyd} we summarize the convergence properties of the Lloyd-type discretized gossip coverage algorithm. To do so, we need the following definition. 
\begin{definition}[Uniform and random persistency]
  \label{def:persistency}
  Let $X$ be a finite set.
  \begin{enumerate}
  \item A map $\map{\sigma}{\integernonnegative}{X}$ is \emph{uniformly
      persistent} if there exists a duration $\Delta\in\natural$ such that,
    for each $x\in{X}$, there exists an increasing sequence of times
    $\seqdef{t_k}{k\in\integernonnegative}\subset\integernonnegative$
    satisfying $t_{k+1}-t_k\leq\Delta$ and $\sigma(t_k)=x$ for all
    $k\in\integernonnegative$.
  \item A stochastic process $\map{\sigma}{\integernonnegative}{X}$ is
    \emph{randomly persistent} if there exists a probability $p\in{]0,1[}$
 such that, for each $x\in{X}$ and
    for each $t\in\integernonnegative$    
    \begin{equation*}
      \Prob\big[\sigma(t+1)= x \,|\,  \sigma(t),\dots,\sigma(1)\big] \geq p.
    \end{equation*}
  \end{enumerate}
\end{definition}

\begin{theorem}[Convergence under persistent gossip]
  \label{th:T-persistent-gossip-lloyd}
  Consider the Lloyd-type discretized gossip coverage algorithm $T$ and the evolutions
  $\map{p}{\integernonnegative}{\ConnPart}$ defined by
  \begin{equation*}
    p(t+1) = T_{\sigma(t)}(p(t)), \quad \text{for } t\in\integernonnegative,
  \end{equation*}
  where $\map{\sigma}{\integernonnegative}{\myset}$ is either a deterministic
  map or a stochastic process.  Then the following statements hold:
  \begin{enumerate}
  \item if $\sigma$ is a uniformly persistent map, then any
    evolution $p$ converges in a finite number of steps to a centroidal Voronoi partition;
    and

  \item if $\sigma$ is a randomly persistent stochastic process, then any
    evolution $p$  converges almost
    surely in a finite number of steps to a centroidal Voronoi partition.
  \end{enumerate}
\end{theorem}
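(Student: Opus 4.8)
\emph{Proof plan.} The plan is to run a LaSalle-type invariance argument on the finite set $\ConnPart$, with $\Hexp$ as a Lyapunov function, and then to identify the limit configurations as centroidal Voronoi partitions. \emph{Step~1 (monotonicity, strict on active steps).} First I would show $\Hexp(T_{ij}(p))\le\Hexp(p)$ for all $p$ and all $i\neq j$, strictly so whenever the step is \emph{active}, i.e.\ $(i,j)\in\E(p)$ and $W_{i\to j}\union W_{j\to i}\neq\emptyset$. Only the $i$- and $j$-summands of $\Hexp$ change; writing $u=p_i\union p_j$, $c_i=\Cd(p_i)$, $c_j=\Cd(p_j)$, the updated sets $\widehat p_i,\widehat p_j$ form by construction the Voronoi partition of the subgraph $G\intersect u$ generated by $\{c_i,c_j\}$ with ties to the lower index. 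A short lemma shows that for $k\in\widehat p_i$ every shortest path of $G\intersect u$ from $k$ to $c_i$ stays inside $\widehat p_i$ (else the triangle inequality would make $k$ strictly $d_u$-closer to $c_j$), and symmetrically for $\widehat p_j$; hence $\widehat p_i,\widehat p_j$ are nonempty and connected (so $T_{ij}$ really does map $\ConnPart$ to $\ConnPart$), and $d_{\widehat p_i}(\cdot,c_i)=d_u(\cdot,c_i)$ on $\widehat p_i$. Therefore
\begin{align*}
  \Hone(\Cd(\widehat p_i);\widehat p_i)+\Hone(\Cd(\widehat p_j);\widehat p_j)
  &\le \sum_{k\in\widehat p_i}d_u(k,c_i)\phi(k)+\sum_{k\in\widehat p_j}d_u(k,c_j)\phi(k)\\
  &= \sum_{k\in u}\min\{d_u(k,c_i),\,d_u(k,c_j)\}\phi(k)\\
  &< \sum_{k\in p_i}d_u(k,c_i)\phi(k)+\sum_{k\in p_j}d_u(k,c_j)\phi(k)\\
  &\le \Hone(c_i;p_i)+\Hone(c_j;p_j),
\end{align*}
the first line by centroid optimality (Definition~\ref{def:Centroid}), the strict step because $W_{i\to j}\union W_{j\to i}\neq\emptyset$, and the last by $d_u\le d_{p_i}$ on $p_i$ and $d_u\le d_{p_j}$ on $p_j$; this is essentially Proposition~\ref{prop:optimal-for-Hgeneric} on the two-region subproblem over $G\intersect u$ together with the induced-subgraph inequality $d_I\ge d_G$.

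\emph{Step~2 (finite-time convergence and identification of the limit).} Since $\ConnPart$ is finite, $\Hexp$ takes finitely many values, so along any evolution $t\mapsto\Hexp(p(t))$ is nonincreasing, hence constant after some finite $T^\star$; by Step~1 every step after $T^\star$ is inactive, so $p(t)=p(T^\star)=:p^\star$ for all $t\ge T^\star$. Persistency then identifies $p^\star$: if $\switchsig$ is uniformly persistent, every ordered pair $(i,j)\in\myset$ is selected at some $t\ge T^\star$, whence $T_{ij}(p^\star)=p(t+1)=p^\star$; if $\switchsig$ is randomly persistent, the uniform lower bound of Definition~\ref{def:persistency} makes every pair selected at infinitely many times a.s., and $\Hexp\circ p$ is a.s.\ eventually constant (a.s.\ nonincreasing, bounded below, finitely many values), giving the same conclusion a.s. In either case $p^\star$ is a common fixed point of all $T_{ij}$; equivalently, for every $(i,j)\in\E(p^\star)$ and every $k\in p^\star_i$,
\begin{equation*}
  d_{p^\star_i\union p^\star_j}\!\big(k,\Cd(p^\star_i)\big)\ \le\ d_{p^\star_i\union p^\star_j}\!\big(k,\Cd(p^\star_j)\big).
\end{equation*}

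\emph{Step~3 (pairwise fixed points are centroidal Voronoi).} This is the one genuine difficulty, since the inequalities above are in the restricted metrics $d_{p^\star_i\cup p^\star_j}$ while a Voronoi partition is defined through the global distances $d_G$; one must upgrade them to $d_G(k,\Cd(p^\star_i))\le d_G(k,\Cd(p^\star_j))$ for \emph{all} $j$. I would argue by contradiction: suppose $q\in p^\star_a$ and $b\neq a$ satisfy $d_G(q,\Cd(p^\star_b))<d_G(q,\Cd(p^\star_a))$, choose such a ``violation'' minimizing $d_G(q,\Cd(p^\star_b))$, and take a shortest $G$-path $\pi=(q=v_0,\dots,v_m=\Cd(p^\star_b))$; minimality forces every $v_\ell$ with $\ell\ge1$ to be a non-violator with own centroid no farther than $\Cd(p^\star_b)$. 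Let $v_\ell$ be the last vertex of $\pi$ not in $p^\star_b$, say $v_\ell\in p^\star_s$: the subpath of $\pi$ beyond $v_\ell$ lies in $u':=p^\star_s\union p^\star_b$, so $d_{u'}(v_\ell,\Cd(p^\star_b))=d_G(v_\ell,\Cd(p^\star_b))$, and the pairwise condition for $(s,b)\in\E(p^\star)$ gives $d_{u'}(v_\ell,\Cd(p^\star_s))\le d_{u'}(v_\ell,\Cd(p^\star_b))$; prepending the subpath from $q$ to $v_\ell$ (which lies on a shortest $q$--$\Cd(p^\star_b)$ path) yields $d_G(q,\Cd(p^\star_s))\le d_G(q,\Cd(p^\star_b))$. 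If $s=a$ this contradicts the violation outright; if $s\neq a$ it produces a violation $(q,a,s)$ of no larger distance, contradicting minimality unless equality holds, in which case one recurses. Pushing this induction to completion — keeping the first region change on an edge where the two-region and global distances agree — is the delicate point, and the step I expect to be the main obstacle. With Step~3 in hand, items (i) and (ii) follow at once from the finite-time convergence of Step~2.
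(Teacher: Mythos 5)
Your Steps~1 and~2 are correct and, taken together, they reconstruct in elementary form the machinery that this paper never spells out for this particular theorem: the statement is imported from the earlier gossip-coverage paper, and the analogous result here (Theorem~\ref{th:T-persistent-gossipOpt}) is proved by checking the four hypotheses of an abstract LaSalle-type invariance principle from the cited reference (positive invariance, a function $U=\Hexp$ that strictly decreases off the fixed points, continuity, persistency). Your strict-decrease computation, the connectivity lemma showing $\widehat p_i,\widehat p_j\in\powerset{Q}$ and $d_{\widehat p_i}(\cdot,c_i)=d_u(\cdot,c_i)$, and the finiteness argument forcing $p(t)$ to freeze and then using persistency to conclude that the frozen partition is a common fixed point of all $T_{ij}$ (i.e.\ is ``centroidal Voronoi in pairs'') are all sound, and arguably cleaner than invoking the abstract theorem on a finite discrete space.

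The genuine gap is your Step~3, and you have correctly located it: it is exactly Lemma~\ref{lem:cVIPiffcV}, which this paper states but does not prove (it defers to the reference). Your extremal argument as written does not close: after choosing a violation $(q,a,b)$ minimizing $d_G(q,\Cd(p^\star_b))$ and extracting the region $p^\star_s$ of the last off-$p^\star_b$ vertex, the case $s\neq a$ only yields $d_G(q,\Cd(p^\star_s))=d_G(q,\Cd(p^\star_b))$ and a new violation $(q,a,s)$ \emph{at the same distance}. Recursing produces a sequence of regions $b,s,s',\dots$ all tied in distance, and nothing in the construction prevents this sequence from cycling (e.g.\ returning to $b$); no strictly decreasing quantity is exhibited, so the contradiction is never reached. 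To finish you must either add a well-founded secondary ordering (for instance, among minimum-distance violations, minimize the number of vertices of the chosen shortest $q$--$\Cd(p^\star_b)$ path lying outside $p^\star_b$, and show the construction strictly decreases it), or simply invoke Lemma~\ref{lem:cVIPiffcV} as the paper does, which would make your Steps~1--2 a complete proof of the theorem.
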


Convergence to a centroidal Voronoi partition puts this pairwise
optimization algorithm in the same class with centralized Lloyd methods.
However, there can be a great number of possible centroidal Voronoi partitions
for a given discretized environment and, in our experience, the algorithm 
too frequently converges to a suboptimal solution.
This issue motivates the developments in the following Section.

\section{Pairwise-optimal gossip coverage algorithm}
\label{sec:optimal}

In this section we present a novel gossip graph coverage algorithm which improves on the performance achievable by the Lloyd-type algorithm reviewed in the previous Section. First we introduce the following notational definition.

Given $p\in \ConnPart$ and distinct components $p_i$ and $p_j$, let $P_{p_i\cup p_j}$ denote the set of all distinct pairs of vertices in $p_i\cup p_j$. We assume that the elements in $P_{p_i \cup p_j}$ are sorted lexicographically. In formal terms, if 
$p_{i} \cup p_{j}=\left\{h_1, \ldots, h_{|p_i\cup p_j|}\right\}$
where $h_s< h_{s+1}$ for $s=\until{|p_i\cup p_j|-1}$, then  
\begin{align*}
P_{p_i\cup p_j}=&\left\{(h_1,h_2), (h_1,h_3),\ldots,(h_1, h_{|p_i\cup p_j|}), (h_2, h_3), \ldots \right.\\
&\qquad\qquad\qquad\left. \ldots, (h_{|p_i\cup p_j|-1}, h_{|p_i\cup p_j|})\right\}.
\end{align*}
Notice that $|P_{p_i\cup p_j}|=\frac{|p_i\cup p_j|\, (|p_i\cup p_j|-1)}{2}$. 

The \emph{pairwise-optimal discretized gossip coverage algorithm} is defined as follows. 

\medskip\footnoterule\vspace{.75\smallskipamount}
\noindent\hfill\textbf{Gossip Optimal Graph Coverage Algorithm}\hfill\vspace{.75\smallskipamount}
\footnoterule\vspace{.75\smallskipamount}

\noindent At each time $t\in\integernonnegative$, each agent $i\in\until{N}$
maintains in memory a connected subset $p_i(t)$.  The collection
$p(0)=\{p_1(0),\dots,p_N(0)\}$ is an arbitrary connected $N-$partition of
$Q$.  At each $t\in\integernonnegative$ a communicating pair, say $(i,j)\in
\E(p(t))$, is selected by a deterministic or stochastic process to be
determined. Assume that $i<j$. Every agent $k\not\in\{i,j\}$ sets
$p_k(t+1)=p_k(t)$, while $i$ and $j$ perform the following:

\begin{algorithmic}[1]
  \STATE agent $i$ transmits its subset $p_i(t)$ to $j$ and vice-versa%
  \STATE both agents compute the set $u = p_i \cup p_j$, the function $\mathcal{D}: P_{u} \to \realpositive$
  $$
  \mathcal{D}((a,b))=\sum_{h\in u} \min \left\{d_{u}(h, a), d_{u}(h, b)\right\},
  $$
  and find the set 
  $\mathcal{S}_{\mathcal{D}}=\mathop{\argmin}_{(h_s,h_r) \in P_{u}} \left\{\mathcal{D}(h_s, h_r)\right\}$
  \STATE assuming $\mathcal{S}_{\mathcal{D}}$ is lexicographically ordered, both agents pick the first pair in $\mathcal{S}_{\mathcal{D}}$, say $(a^*,b^*)$
  \STATE both agents compute the sets 
    \begin{align*}
      & W_{a^*}=\left\{x\in u : d_{u}(x, a^*) \leq d_{u}(x, b^*)\right\} \\
      & W_{b^*}=\left\{x\in u : d_{u}(x, b^*) < d_{u}(x, a^*)\right\}
  \end{align*}
  \IF {$\Hone(\Cd(W_{a^*});W_{a^*}) + \Hone(\Cd(W_{b^*});W_{b^*}) <$ \\ 
  	   $\Hone(\Cd(p_i(t));p_i(t)) + \Hone(\Cd(p_j(t));p_j(t))$}
      \STATE $p_i(t+1)=W_{a^*}, \quad p_j(t+1)=W_{b^*}$
  \ELSE
      \STATE $p_i(t+1)=p_i(t), \quad p_j(t+1)=p_j(t)$
  \ENDIF
\end{algorithmic}
\vspace{.5\smallskipamount}\footnoterule\smallskip

\begin{remark}
The lexicographic rule for picking $(a^*,b^*)$ used here makes the dynamical system represented by the algorithm deterministic and well-defined.  In practice any pair in $\mathcal{S}_{\mathcal{D}}$ can be chosen.
\end{remark}

The following Proposition characterizes a desirable property of the regions $W_{a^*}$ and $W_{b^*}$. 

\begin{proposition}\label{prop:Wconnected}
The regions $W_{a^*}$ and $W_{b^*}$ as defined in Gossip Optimal Graph Coverage Algorithm, are connected.
\end{proposition}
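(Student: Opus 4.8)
The plan is to show that every vertex of $W_{a^*}$ is joined to $a^*$ by a path lying inside $W_{a^*}$, and likewise that every vertex of $W_{b^*}$ is joined to $b^*$ by a path lying inside $W_{b^*}$. Since $a^*\neq b^*$ (the pair $(a^*,b^*)$ comes from $P_u$, whose entries are pairs of \emph{distinct} vertices) we have $d_u(a^*,b^*)>0$, so $a^*\in W_{a^*}$ and $b^*\in W_{b^*}$; combined with the claim, this shows that the induced subgraphs $G\intersect W_{a^*}$ and $G\intersect W_{b^*}$ are connected (and nonempty), which is what we want.

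Recall that $u=p_i\cup p_j\in\powerset{Q}$ is connected, so $d_u$ is a finite metric on $u$. Fix $x\in W_{a^*}$ and let $\gamma$ be a shortest path in $G\intersect u$ from $x$ to $a^*$. Since every subpath of a shortest path is again shortest, any vertex $y$ on $\gamma$ satisfies $d_u(x,a^*)=d_u(x,y)+d_u(y,a^*)$. I claim $y\in W_{a^*}$: if not, then $d_u(y,b^*)<d_u(y,a^*)$, and the triangle inequality for $d_u$ gives $d_u(x,b^*)\leq d_u(x,y)+d_u(y,b^*)<d_u(x,y)+d_u(y,a^*)=d_u(x,a^*)$, contradicting $x\in W_{a^*}$. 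Hence every vertex of $\gamma$ lies in $W_{a^*}$, and since both endpoints of each edge of $\gamma$ lie in $W_{a^*}$, the path $\gamma$ is in fact a path in $G\intersect W_{a^*}$ joining $x$ to $a^*$.

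The argument for $W_{b^*}$ is symmetric, but must use the strict inequality appearing in its definition. Fix $x\in W_{b^*}$, so $d_u(x,b^*)<d_u(x,a^*)$, and let $\gamma$ be a shortest path in $G\intersect u$ from $x$ to $b^*$; as before $d_u(x,b^*)=d_u(x,y)+d_u(y,b^*)$ for each vertex $y$ of $\gamma$. If some such $y$ were not in $W_{b^*}$, then $d_u(y,a^*)\leq d_u(y,b^*)$, so $d_u(x,a^*)\leq d_u(x,y)+d_u(y,a^*)\leq d_u(x,y)+d_u(y,b^*)=d_u(x,b^*)$, contradicting $x\in W_{b^*}$. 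Thus $\gamma$ lies in $G\intersect W_{b^*}$ and joins $x$ to $b^*$, completing the proof.

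There is no serious obstacle here. The only two points needing a little care are: (i) the asymmetric tie-breaking, since $W_{a^*}$ is defined with ``$\leq$'' and $W_{b^*}$ with ``$<$'' — this is precisely why one of the two triangle-inequality chains has a strict step and the other a non-strict step, and in each case the step lands on the correct side of the relevant membership condition; and (ii) the routine observation that a path all of whose vertices belong to a subset $S\subseteq u$ is automatically a path in the induced subgraph $G\intersect S$, which is what upgrades ``connected in $G\intersect u$'' to ``connected in $G\intersect W_{a^*}$ (resp.\ $G\intersect W_{b^*}$)''.
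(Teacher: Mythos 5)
Your proof is correct and follows essentially the same route as the paper's: show that any shortest path in $G\intersect u$ from a vertex of $W_{a^*}$ (resp.\ $W_{b^*}$) to $a^*$ (resp.\ $b^*$) stays inside that set, by combining the subpath-of-a-shortest-path identity with the triangle inequality for $d_u$. Your write-up is in fact more careful than the paper's sketch, since you treat the $\leq$ versus $<$ tie-breaking in the two cases explicitly and note why $a^*\in W_{a^*}$ and $b^*\in W_{b^*}$.
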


Thanks to Proposition~\ref{prop:Wconnected} we have that, for any pair $(i,j)\in\until{N}^2$, $i\not=j$, we can define the map
$\map{T_{ij}}{\ConnPart}{\ConnPart}$ by
\begin{equation*}
  T_{ij}(p) =
  \begin{cases}
    p, \qquad \qquad  \text{if}\,\,\,\,\,\, (i,j)\notin \E(p)\\
    \{p_1,\dots,\widehat{p}_i,\dots,\widehat{p}_j, \dots,p_N\},
    \qquad \text{otherwise},
  \end{cases}
\end{equation*}
where $\widehat{p}_i = W_{a^*}$ and $\widehat{p}_j = W_{b^*}$. 
Therefore, the dynamical system on the space of partitions can again be described in compact form by  \eqref{eq:OurAlgo-ij} and~\eqref{eq:AlgoT} but with this new $T_{ij}$. 

To establish the convergence properties of the \emph{pairwise-optimal discretized gossip coverage algorithm} we need the following definition.

\begin{definition}
A partition $p\in \ConnPart$ is said to be a \emph{pairwise-optimal partition} if, for every $(i,j)\in \E(p)$
\begin{align*}
&\Hone(\Cd(p_i); p_i)+\Hone(\Cd(p_j); p_j)=\\
&\qquad  \min_{a, b \in p_i \cup p_j} \left\{\sum_{k\in p_i \cup p_j} \min \left\{d_{p_i \cup p_j} (a,k), d_{p_i \cup p_j} (b,k)\right\}\right\}
\end{align*}
\end{definition}

The following Proposition characterizes the set of the pairwise-optimal partitions. 

\begin{proposition}\label{prop:OptPair}
Let $p \in \ConnPart$ be a \emph{pairwise-optimal partition}. Then $p$ is also a \emph{centroidal Voronoi partition}.
\end{proposition}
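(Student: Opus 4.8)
The plan is to first turn the defining equality of a pairwise-optimal partition into a sharp \emph{local} statement about every adjacent pair of cells, and then to bootstrap it, via an induction on geodesic length, into the \emph{global} Voronoi inequalities. For the local step, fix $(i,j)\in\E(p)$ and write $u=p_i\cup p_j$. For $k\in p_i$ one has the termwise bound $\min\{d_u(k,\Cd(p_i)),d_u(k,\Cd(p_j))\}\le d_u(k,\Cd(p_i))\le d_{p_i}(k,\Cd(p_i))$, and the symmetric one for $k\in p_j$; since $u=p_i\cup p_j$ is a disjoint union, summing against $\phi$ gives
\[
\min_{a,b\in u}\ \sum_{k\in u}\min\{d_u(k,a),d_u(k,b)\}\,\phi(k)\ \le\ \Hone(\Cd(p_i);p_i)+\Hone(\Cd(p_j);p_j).
\]
Pairwise optimality of $p$ asserts precisely that equality holds here, so every estimate just used is tight; because $\phi$ is strictly positive this forces, for every $k\in p_i$,
\[
d_u(k,\Cd(p_i))=d_{p_i}(k,\Cd(p_i))\qquad\text{and}\qquad d_u(k,\Cd(p_i))\le d_u(k,\Cd(p_j)),
\]
together with the symmetric statements for $k\in p_j$. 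Moreover $\Cd(p_1),\dots,\Cd(p_N)$ are pairwise distinct for free, since $\Cd(p_i)\in p_i$ and the cells are disjoint, so ``Voronoi partition generated by $\Cd(p)$'' is well posed.

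By the definition of a centroidal Voronoi partition it now suffices to show $d_G(x,\Cd(p_i))\le d_G(x,\Cd(p_j))$ for all $i$, all $x\in p_i$ and all $j\ne i$. The catch is that the displayed inequalities only control distances measured \emph{inside} $u=p_i\cup p_j$: a $G$-geodesic from $x$ to $\Cd(p_j)$ may leave $u$, so $d_u(x,\Cd(p_j))$ can strictly exceed $d_G(x,\Cd(p_j))$, and the $d_u$-inequality cannot be read off as a $d_G$-inequality.

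I resolve this by proving simultaneously, by strong induction on $\delta$ ranging over the finite set $\{d_G(x,\Cd(p_m)):x\in Q,\ m\in\until{N}\}$, the two statements: (A) if $x\in p_i$ and $d_G(x,\Cd(p_i))\le\delta$ then $d_{p_i}(x,\Cd(p_i))=d_G(x,\Cd(p_i))$; and (B) if $x\in p_i$, $j\ne i$ and $d_G(x,\Cd(p_j))\le\delta$ then $d_G(x,\Cd(p_i))\le d_G(x,\Cd(p_j))$. Both are trivial when $\delta=0$. For the step at $\delta>0$, take a $G$-geodesic out of $x$ toward the relevant centroid, let $v_1$ be its second vertex and $w'=w_{(x,v_1)}>0$, so $v_1$ is at distance $\delta-w'<\delta$ from that centroid. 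If $v_1\in p_i$, prepend the edge $(x,v_1)$ — which lies in the subgraph induced by $p_i$ — and invoke the inductive hypothesis at $v_1$. If $v_1$ lies in another cell $p_\ell$, then $(i,\ell)\in\E(p)$; the inductive hypothesis (B) (or, when the target is $v_1$'s own centroid, tautologically) gives $d_G(v_1,\Cd(p_\ell))\le\delta-w'$, then (A) upgrades this to $d_{p_\ell}(v_1,\Cd(p_\ell))=d_G(v_1,\Cd(p_\ell))$, and the local identities for the pair $(i,\ell)$ together with a triangle inequality \emph{inside} $u=p_i\cup p_\ell$ give
\begin{align*}
d_u(x,\Cd(p_\ell)) &\le w'+d_u(v_1,\Cd(p_\ell)) = w'+d_{p_\ell}(v_1,\Cd(p_\ell)) \le \delta,\\
d_{p_i}(x,\Cd(p_i)) &= d_u(x,\Cd(p_i)) \le d_u(x,\Cd(p_\ell)) \le \delta .
\end{align*}
With $d_G(x,\Cd(p_i))\le d_{p_i}(x,\Cd(p_i))$ this yields (B) at level $\delta$; and the very same edge-peeling yields (A), since whichever cell the second vertex of a geodesic $x\to\Cd(p_i)$ lies in one obtains $d_{p_i}(x,\Cd(p_i))\le\delta$, which with $d_{p_i}(x,\Cd(p_i))\ge d_G(x,\Cd(p_i))=\delta$ forces equality. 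Taking $\delta$ maximal, (B) is exactly the Voronoi property, so $p$ is a centroidal Voronoi partition.

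The delicate point, and the step I expect to be the main obstacle, is that (B) — the statement we actually want — admits no self-contained induction: its step needs (A) at smaller scales to rule out ``shortcuts'' through a neighbouring cell, while (A) in turn relies on (B) to identify the nearest centroid. The key move is to carry both on a single induction on geodesic distance to a centroid, invoking the local inequalities of the first part only across one adjacency-graph edge at a time; the remaining ingredients — the term-by-term squeeze, the triangle inequalities, the base case — are routine.
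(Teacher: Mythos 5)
Your proof is correct, but it takes a genuinely different route from the paper's. The paper first shows (by an exchange argument: if some $x\in p_i$ were strictly closer to $\Cd(p_j)$ in $p_i\cup p_j$, transferring the $p_i$-portion of a shortest path from $x$ to $\Cd(p_j)$ would strictly lower the two one-center costs, contradicting pairwise optimality) that a pairwise-optimal partition is \emph{centroidal Voronoi in pairs}, i.e.\ satisfies the Voronoi inequalities with respect to the union-restricted distances $d_{p_i\cup p_j}$; it then invokes, as a black box, a lemma from the authors' earlier work stating that centroidal-Voronoi-in-pairs is equivalent to centroidal Voronoi. You instead obtain the local pairwise condition by a termwise squeeze (testing the minimizing pair against $(\Cd(p_i),\Cd(p_j))$ and using $\phi>0$ to force every intermediate inequality to be tight), which is cleaner than the exchange argument and, importantly, also yields the extra identity $d_{p_i\cup p_j}(k,\Cd(p_i))=d_{p_i}(k,\Cd(p_i))$ for $k\in p_i$; you then prove the local-to-global bridge yourself via the coupled induction (A)/(B) on geodesic distance to the centroids, peeling one edge of a $G$-geodesic at a time and crossing at most one adjacency edge per step. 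The net effect is a self-contained proof that, in particular, reproves the cited equivalence lemma in the special case at hand; the paper's proof is shorter only because it outsources exactly that local-to-global passage, which you correctly identified as the genuinely delicate step (a $G$-geodesic to another centroid may leave $p_i\cup p_j$, so the $d_{p_i\cup p_j}$ inequalities do not directly give $d_G$ inequalities). The one cosmetic discrepancy is that the paper's displayed definition of a pairwise-optimal partition omits the weight $\phi(k)$ from the minimized sum; you have silently reinstated it, which is the only reading under which the defining equality is dimensionally consistent with $\Hone$, so this is a correction rather than a gap.
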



We are now ready to state the main result of this paper.

\begin{theorem}[Convergence under persistent gossip]
  \label{th:T-persistent-gossipOpt}
  Consider the pairwise-optimal discretized gossip coverage algorithm $T$ and the evolutions
  $\map{p}{\integernonnegative}{\ConnPart}$ defined by
  \begin{equation*}
    p(t+1) = T_{\sigma(t)}(p(t)), \quad \text{for } t\in\integernonnegative,
  \end{equation*}
  where $\map{\sigma}{\integernonnegative}{\myset}$ is either a deterministic
  map or a stochastic process.  Then the following statements hold:
  \begin{enumerate}
  \item if $\sigma$ is a uniformly persistent map, then any
    evolution $p$ converges in a finite number of steps to a pairwise-optimal partition;
    and

  \item if $\sigma$ is a randomly persistent stochastic process, then any
    evolution $p$  converges almost
    surely in a finite number of steps to a pairwise-optimal partition.
  \end{enumerate}
\end{theorem}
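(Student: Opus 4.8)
The plan is to run the Lyapunov / LaSalle-type argument already behind Theorem~\ref{th:T-persistent-gossip-lloyd}, now with the new map $T_{ij}$. There are three steps: (a) the cost $\Hexp$ is non-increasing along every evolution and strictly decreasing at any step that actually changes the partition; (b) since $\ConnPart$ is finite, this forces every evolution to become constant after finitely many steps; (c) the partition $p^\infty$ at which an evolution freezes must be pairwise-optimal, because persistency of $\sigma$ forces every pair to be tried and, were $p^\infty$ not pairwise-optimal, trying the offending pair would change it.

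For (a) and (b): fix $t$ and set $(i,j)=\sigma(t)$ with $i<j$. All components of $p(t)$ other than the $i$-th and $j$-th are untouched, so $\Hexp(p(t+1))-\Hexp(p(t))$ equals $0$ when $(i,j)\notin\E(p(t))$ or the IF test of the algorithm fails, and equals $[\Hone(\Cd(W_{a^*});W_{a^*})+\Hone(\Cd(W_{b^*});W_{b^*})]-[\Hone(\Cd(p_i(t));p_i(t))+\Hone(\Cd(p_j(t));p_j(t))]$ otherwise; the update replacing $p_i(t),p_j(t)$ by $W_{a^*},W_{b^*}$ is executed only when this quantity is strictly negative, in which case $\{W_{a^*},W_{b^*}\}\neq\{p_i(t),p_j(t)\}$ as unordered pairs (otherwise the quantity would be zero). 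Hence $t\mapsto\Hexp(p(t))$ is non-increasing and decreases strictly at $t$ exactly when $p(t+1)\neq p(t)$. Since $Q$ is finite, $\ConnPart$ is finite and $\Hexp$ takes finitely many values, so the non-increasing sequence $\Hexp(p(t))$ is eventually constant: there is a finite time $t^*$ — a finite random time, pathwise, in the stochastic case — with $\Hexp(p(t))=\Hexp(p(t^*))$, and hence $p(t)=p(t^*)=:p^\infty$, for all $t\ge t^*$; this is convergence in finitely many steps.

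For (c): suppose $p^\infty$ is not pairwise-optimal and pick $(i,j)\in\E(p^\infty)$, $i<j$, with $u:=p_i^\infty\cup p_j^\infty$ and $\Hone(\Cd(p_i^\infty);p_i^\infty)+\Hone(\Cd(p_j^\infty);p_j^\infty)>\min_{(a,b)\in P_u}\mathcal{D}((a,b))$. Running the update rule on $(p_i^\infty,p_j^\infty)$ produces a minimizer $(a^*,b^*)$ of $\mathcal{D}$ on $P_u$ and the sets $W_{a^*},W_{b^*}$; by Proposition~\ref{prop:Wconnected} — more precisely, by the geodesic-containment property established in its proof — $W_{a^*},W_{b^*}$ are connected and $d_{W_{a^*}}(a^*,\cdot)=d_u(a^*,\cdot)$ on $W_{a^*}$, $d_{W_{b^*}}(b^*,\cdot)=d_u(b^*,\cdot)$ on $W_{b^*}$. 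Since $W_{a^*}$ and $W_{b^*}$ split $u$ according to which of $a^*,b^*$ is nearer, this gives $\Hone(a^*;W_{a^*})+\Hone(b^*;W_{b^*})=\mathcal{D}((a^*,b^*))$, and using that the centroid minimizes the one-center function on its region,
\begin{align*}
&\Hone(\Cd(W_{a^*});W_{a^*})+\Hone(\Cd(W_{b^*});W_{b^*})\le\mathcal{D}((a^*,b^*))\\
&\qquad=\min_{(a,b)\in P_u}\mathcal{D}((a,b))<\Hone(\Cd(p_i^\infty);p_i^\infty)+\Hone(\Cd(p_j^\infty);p_j^\infty),
\end{align*}
so the IF test succeeds and $T_{ij}(p^\infty)\neq p^\infty$. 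Now $(i,j)\in\myset$, so if $\sigma$ is uniformly persistent there is $t>t^*$ with $\sigma(t)=(i,j)$, forcing $p(t+1)=T_{ij}(p^\infty)\neq p^\infty=p(t)$, which contradicts $p(t)=p^\infty$ for all $t\ge t^*$; this proves (i). If $\sigma$ is randomly persistent, then for each fixed pair the conditional Borel--Cantelli lemma gives that $\sigma(t)=(i,j)$ for infinitely many $t$ with probability one, so almost surely every pair is tried after $t^*$ and the same contradiction shows that almost surely $p^\infty$ is pairwise-optimal; this proves (ii).

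The main obstacle is Step (c), and within it the identity $\Hone(a^*;W_{a^*})+\Hone(b^*;W_{b^*})=\mathcal{D}((a^*,b^*))$: this is where Proposition~\ref{prop:Wconnected} must be used in its sharp geodesic-containment form rather than merely for the connectivity of $W_{a^*},W_{b^*}$, and it has to be carried out so that the one-center functions, the map $\mathcal{D}$, and the weight $\phi$ are normalized consistently. The only other delicate point is the passage to the randomly-persistent case, which is the standard ``almost surely infinitely often'' argument via Borel--Cantelli; everything else — the descent and the finiteness — is routine.
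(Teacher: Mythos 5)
Your proof is correct, but it takes a genuinely different route from the paper. The paper does not carry out the LaSalle argument by hand: it verifies the four hypotheses of an abstract convergence theorem for set-valued maps under persistent switching (Theorems~4.3 and~4.5 of the cited reference \cite{PF-RC-FB:08u-arxiv}) --- compactness and positive invariance of $\ConnPart$ as a finite discrete space, the strict-descent property of $\Hexp$ (which is exactly your step~(a), stated in the paper as Lemma~\ref{prop:Tdecr}), continuity of the $T_{ij}$ (trivial on a discrete space), and persistency --- and then concludes convergence to the common fixed points of the maps $T_{ij}$, asserting without further argument that this fixed-point set coincides with the set of pairwise-optimal partitions. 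Your version is self-contained and more elementary: finiteness of $\ConnPart$ plus strict descent gives freezing at some $p^\infty$ directly, and persistency then forces $T_{ij}(p^\infty)=p^\infty$ for every $(i,j)\in\E(p^\infty)$. Your step~(c) is the real added value: the identity $\Hone(a^*;W_{a^*})+\Hone(b^*;W_{b^*})=\mathcal{D}((a^*,b^*))$ via the geodesic-containment property inside the proof of Proposition~\ref{prop:Wconnected}, followed by centroid optimality, is precisely what is needed to show that a non-pairwise-optimal partition fails to be a fixed point (i.e., the IF test succeeds), and the paper leaves this implicit. You are also right to flag the normalization mismatch: as printed, $\mathcal{D}$ and the definition of pairwise-optimal partition omit the weight $\phi$ that appears in $\Hone$, and the argument only closes once $\mathcal{D}((a,b))=\sum_{h\in u}\min\{d_u(h,a),d_u(h,b)\}\phi(h)$ is adopted consistently. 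The one trade-off is in the stochastic case: the paper again cites the abstract theorem, while you invoke the conditional (L\'evy) Borel--Cantelli lemma; this works because $p(t)$ is measurable with respect to the past of $\sigma$, but it is worth saying so explicitly.
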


  In contrast to the Lloyd-type algorithm in
  Section~\ref{sec:lloyd}, this new pairwise-optimal algorithm converges to
  an element in the set of pairwise-optimal partitions.  This result is a
  useful improvement as the set of pairwise-optimal partitions is a subset
  of the set of centroidal Voronoi partitions, so that the new algorithm
  can avoid many of the local minima in which Lloyd-type methods get stuck.
  We will demonstrate this improvement in Section~\ref{sec:simulations}.

\section{Convergence proofs}
\label{sec:proofs}

In this Section we provide proofs of the results stated in the previous Section, starting with Proposition \ref{prop:Wconnected}. 

\begin{proof}[Proposition \ref{prop:Wconnected}]
To prove the statement of the Proposition we show that, given $x\in W_{b^*}$, any shortest path in $p_i \cup p_j$ connecting $x$ to $b^*$ completely belongs to $W_{b^*}$. We proceed by contradiction. Let $s_{x,b^*}$ denote a shortest path in $p_i\cup p_j$ connecting $x$ to $b^*$ and let us assume that there exists $m\in s_{x,b^*}$ such that $m\in W_{a^*}$. For $m$ to be in $b^*$ means that $d_{p_i\cup p_j}(m, b^*) < d_{p_i\cup p_j}(m, a^*)$. This implies that
\begin{align*}
d_{p_i\cup p_j}(x, a^*)&=d_{p_i\cup p_j}(m,a^*)+d_{p_i \cup p_j}(x,m)\\
&> d_{p_i\cup p_j}(m,b^*)+d_{p_i\cup p_j}(x,m)\\
&\geq d_{p_i \cup p_j}(x,b^*).
\end{align*}
This is a contradiction for $x\in b^*$. Similar considerations hold for $W_{a^*}$.
\end{proof}

To prove Proposition \ref{prop:OptPair}  we need to review the notion of a \emph{centroidal Voronoi in pairs} partition~\cite{JWD-RC-PF-FB:08z}.

\begin{definition}
Consider the connected graph $G=(Q,E,w).$. Let $p\in \ConnPart$ and let $\mathcal{G}(p)$ be the associated adjacency graph. Then
$p\in \ConnPart$ is said to be \emph{centroidal Voronoi in pairs} if, for any $i\in \until{N}$ we have that
\begin{equation}
d_{p_i\cup p_j} (k, \Cd(p_i))\leq d_{p_i\cup p_j} (k, \Cd(p_j))
\end{equation}
for all $k\in p_i$ and for all $j$ such that $(i,j) \in \mathcal{E}(p)$.
\end{definition}

The following Lemma \cite{JWD-RC-PF-FB:08z} states a useful equivalence property between the set of centroidal Voronoi in pairs partitions and the set of centroidal Voronoi partitions.

\begin{lemma}\label{lem:cVIPiffcV}
  A partition $p\in\ConnPart$ is centroidal Voronoi in pairs if and only if it is centroidal Voronoi.
\end{lemma}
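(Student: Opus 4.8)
The plan is to prove the two implications separately, the only real subtlety being that ``centroidal Voronoi in pairs'' is stated with the subgraph metrics $d_{p_i\cup p_j}$, which over-estimate $d_G$, whereas ``centroidal Voronoi'' is stated with $d_G$; so in each direction one must translate between these metrics. I would warn up front that the tempting shortcut — showing that a $d_G$-shortest path from a vertex of $p_i$ to $\Cd(p_i)$ never leaves $p_i$ — is actually false (one can build a small centroidal Voronoi partition in which the geodesic from a vertex of $p_i$ to $\Cd(p_i)$ cuts through a neighbour of $p_i$), so the translation has to be done by a more careful connectedness argument in the spirit of the proof of Proposition~\ref{prop:Wconnected}. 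Write $c_i=\Cd(p_i)$ throughout.

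For the forward implication (centroidal Voronoi $\Rightarrow$ centroidal Voronoi in pairs), I would fix $(i,j)\in\E(p)$, put $H=G\intersect(p_i\cup p_j)$, and show that the two-centre Voronoi partition of $H$ generated by $(c_i,c_j)$, ties broken to the lower index, is exactly $(p_i,p_j)$; this is equivalent to the required pairwise inequality. The argument of Proposition~\ref{prop:Wconnected} shows that $B=\{x\in p_i\cup p_j:\ d_H(x,c_j)<d_H(x,c_i)\}$ is connected and that $d_H$-geodesics from any $x\in B$ to $c_j$ stay in $B$; since $c_j\in B$, if some vertex of $p_i$ belonged to $B$ we could trace such a geodesic until it meets a boundary edge $(y,y')$ with $y\in B\cap p_i$ and $y'\in p_j$, and then derive a contradiction from the global centroidal-Voronoi inequalities at $y$ and at $y'$ (using that the geodesic weight is realised inside $H$). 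Hence $B\subseteq p_j$, i.e. $p_i\subseteq\{x:\ d_H(x,c_i)\le d_H(x,c_j)\}$, which is the claim.

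For the converse (centroidal Voronoi in pairs $\Rightarrow$ centroidal Voronoi), I would argue by minimal counterexample. Suppose $p$ is centroidal Voronoi in pairs but some triple $(k,i,j)$ with $k\in p_i$ violates $d_G(k,c_i)\le d_G(k,c_j)$; among all such violations choose one minimizing $d_G(k,c_j)$, and fix a $d_G$-geodesic $k=u_0,u_1,\dots,u_L=c_j$. One checks first that $u_1\notin p_i$ (else $(u_1,i,j)$ is a violation with strictly smaller $d_G(\cdot,c_j)$), so $u_1$ lies in a region $p_\ell$ with $(i,\ell)\in\E(p)$; minimality applied to $(u_1,\ell,j)$ gives $d_G(u_1,c_\ell)\le d_G(u_1,c_j)$, whence
\begin{equation*}
d_G(k,c_\ell)\le d_G(k,u_1)+d_G(u_1,c_\ell)\le d_G(k,u_1)+d_G(u_1,c_j)=d_G(k,c_j)<d_G(k,c_i),
\end{equation*}
so either $d_G(k,c_\ell)<d_G(k,c_j)$ — contradicting minimality and finishing the proof — or $d_G(k,c_\ell)=d_G(k,c_j)$, giving a minimal violation $(k,i,\ell)$ with $i$ and $\ell$ now \emph{adjacent}. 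In this last case the centroidal-Voronoi-in-pairs inequality on $(i,\ell)$ yields $d_G(k,c_i)\le d_{p_i\cup p_\ell}(k,c_i)\le d_{p_i\cup p_\ell}(k,c_\ell)$, and the remaining task is to bound $d_{p_i\cup p_\ell}(k,c_\ell)$ by $d_G(k,c_\ell)$; for this I would use that $u_1$ already lies on a $d_G$-geodesic from $k$ to $c_\ell$ and that, by the same minimality, every later vertex $v$ of that geodesic satisfies $d_G(v,c_s)\le d_G(v,c_\ell)$ where $p_s\ni v$ — enough, via one more application of the connectedness argument of Proposition~\ref{prop:Wconnected} inside the relevant two-region subgraphs, to re-route the geodesic entirely within $p_i\cup p_\ell$ and conclude $d_G(k,c_i)\le d_G(k,c_\ell)$, the desired contradiction.

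The routine parts are the triangle-inequality chains and the minimality bookkeeping; the substantive work — and where I expect the main obstacle — is precisely this ``metric reconciliation'', i.e. converting the pairwise inequalities (in the $d_{p_i\cup p_j}$ metrics) into statements about $d_G$ by the connectedness/re-routing arguments in the style of Proposition~\ref{prop:Wconnected}, with the non-strict ties between centroids (the $\le$ versus $<$ in the definitions of the ``closer'' sets) being the delicate point throughout. I would also verify at the end that the minimal-minimizer tie-break in Definition~\ref{def:Centroid} is never actually used here — any vertex of $p_i$ minimizing $\Hone(\cdot\,;p_i)$ should serve equally well.
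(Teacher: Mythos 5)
The paper itself offers no proof of Lemma~\ref{lem:cVIPiffcV}: it is imported from~\cite{JWD-RC-PF-FB:08z}, and only the implication ``centroidal Voronoi in pairs $\Rightarrow$ centroidal Voronoi'' is ever used (in the proof of Proposition~\ref{prop:OptPair}). So your proposal must stand on its own, and it does not: in both directions the essential step is left unproved, and the forward direction cannot be repaired because, with the definitions exactly as stated in this paper, it is false. Counterexample: take unit edge weights on vertices $\{k,a_1,a_2,c_1,m,c_3,z,c_2\}$ with edges $(k,a_1)$, $(a_1,a_2)$, $(a_2,c_1)$, $(k,m)$, $(m,c_1)$, $(m,c_3)$, $(k,z)$, $(z,c_2)$; set $p_1=\{k,a_1,a_2,c_1\}$, $p_2=\{z,c_2\}$, $p_3=\{m,c_3\}$, and take $\phi$ very large at $c_1,c_2,c_3$ so that $\Cd(p_i)=c_i$. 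One checks $d_G(x,c_i)\le d_G(x,c_j)$ for every $x\in p_i$ and every $j$ (the binding cases are ties: $d_G(k,c_1)=d_G(k,c_2)=d_G(k,c_3)=2$ and $d_G(m,c_1)=d_G(m,c_3)=1$), so $p$ is a centroidal Voronoi partition. Yet $p_1$ and $p_2$ are adjacent and $d_{p_1\cup p_2}(k,c_1)=3>2=d_{p_1\cup p_2}(k,c_2)$, because the shortcut $k,m,c_1$ is lost upon restricting to $p_1\cup p_2$. This is precisely the configuration your own opening warning anticipates (a geodesic from $k$ to $\Cd(p_1)$ cutting through a neighbouring region), and it shows that the ``contradiction from the global centroidal-Voronoi inequalities at $y$ and at $y'$'' which your forward argument relies on simply does not exist: here $y=k$, $y'=z$, and every global inequality holds (with equality). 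The forward implication can only be true under a differently-metrized definition of Voronoi partition or a no-ties assumption, presumably what the source reference uses; as transcribed here it fails.

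For the converse --- the direction the paper actually needs --- your minimal-counterexample bookkeeping is correct as far as it goes: $u_1\notin p_i$, and the reduction to an adjacent $\ell$ with $d_G(k,c_\ell)=d_G(k,c_j)$ are both sound. But the proof then rests entirely on the unestablished claim $d_{p_i\cup p_\ell}(k,c_\ell)\le d_G(k,c_\ell)$, and ``one more application of the connectedness argument of Proposition~\ref{prop:Wconnected}'' is not a proof of it. Note in particular that the natural sub-claim one would reach for --- that in a centroidal-Voronoi-in-pairs partition every vertex of $p_\ell$ reaches $c_\ell$ by a geodesic inside $p_\ell$ --- is itself false: the two-region partition $\{k,a_1,a_2,c_1\}$, $\{m,c_3\}$ of the graph above (delete $z$ and $c_2$) is centroidal Voronoi in pairs, yet $d_{p_1}(k,c_1)=3>2=d_G(k,c_1)$. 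So the re-routing must genuinely exploit the union $p_i\cup p_\ell$ together with all the minimality hypotheses, and as written this step is a gap, not routine verification. In summary: one direction of the statement is false under the paper's definitions, and the direction that matters is not yet proved by your argument.
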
 

We are now ready to prove Proposition~\ref{prop:OptPair}.

\begin{proof}[Proposition \ref{prop:OptPair}]
We will show that if a partition $p\in \ConnPart$ is a pairwise-optimal partition then $p$ is also centroidal Voronoi in pairs. Then from Lemma~\ref{lem:cVIPiffcV} it will follow that $p$ is also centroidal Voronoi. 

We proceed by contradiction. Let $p\in \ConnPart$ be a pairwise-optimal partition, let $p_i$ and $p_j$ be two adjacent regions, and let us assume that there exists $x\in p_i$ such that 
\begin{equation}\label{eq:pi-pj}
d_{p_i \cup p_j}(x,\Cd(p_i))> d_{p_i \cup p_j}(x, \Cd(p_j)).
\end{equation}
Let $s_{x,\Cd(p_j)}$ denote a shortest path in $p_i\cup p_j$ connecting $x$ to $\Cd(p_j)$. Without loss of generality let us assume that $s_{x,\Cd(p_j)}=\left(x, y_1, y_2, \ldots, y_m, \Cd(p_j)\right)$, for a suitable $m$-tuple in $p_i \cup p_j$. Now let $s^{(i)}_{x, \Cd(p_j)}$ denote the subset of $s_{x,\Cd(p_j)}$ consisting of only the vertices belonging to $p_i$, i.e.,
$$
s^{(i)}_{x, \Cd(p_j)}=\left\{y\in s_{x, \Cd(p_j)}| y\in p_i\right\}.
$$
Observe that from~\eqref{eq:pi-pj} it follows that
$$
d_{p_i \cup p_j}(y,\Cd(p_i))> d_{p_i \cup p_j}(y, \Cd(p_j))
$$
for all $y\in s^{(i)}_{x, \Cd(p_j)}$. Therefore, by letting
$$
\bar{p}_i=p_i \setminus s^{(i)}_{x, \Cd(p_j)} \qquad \text{and} \qquad \bar{p}_j=p_j \cup s^{(i)}_{x, \Cd(p_j)} 
$$
we have that 
\begin{align*}
&\Hone(\Cd(\bar{p}_i); \bar{p}_i)+ \Hone(\Cd(\bar{p}_j); \bar{p}_j)\\
&\qquad\qquad\qquad \leq \Hone(\Cd(p_i); \bar{p}_i)+ \Hone(\Cd(p_j); \bar{p}_j)\\
&\qquad\qquad\qquad < \Hone(\Cd(p_i); p_i)+ \Hone(\Cd(p_j); p_j).
\end{align*}
This last inequality contradicts the fact that $p$ is a pairwise-optimal partition.
\end{proof}

To provide the proof of Theorem \ref{th:T-persistent-gossipOpt} we need the following intermediate result.

\begin{lemma}\label{prop:Tdecr}
Let $\setmap{T}{\ConnPart}{\ConnPart}$ be the point-to-set map defined for the Gossip Optimal Graph Coverage Algorithm.
If $p(t)\in\ConnPart$ and $p(t+1)\in T(p(t))\setminus p(t)$, then $\Hexp(p(t+1)) < \Hexp(p(t))$.
\end{lemma}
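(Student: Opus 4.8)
The plan is to use two facts: that $\Hexp$ decomposes as a sum over the components of a partition, so that a single gossip update alters only two of the $N$ summands; and that the strict-improvement test built into the Gossip Optimal Graph Coverage Algorithm is exactly the statement that the sum of these two summands decreases. Once these are isolated, the argument is essentially a one-line cancellation.

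First I would unwind the hypothesis. Since $p(t+1)\in T(p(t))\setminus p(t)$, there is a pair $(i,j)\in\E(p(t))$, and we may assume $i<j$, with $p(t+1)=T_{ij}(p(t))$ and $p(t+1)\neq p(t)$. Because $T_{ij}$ returns its argument unchanged whenever $(i,j)\notin\E(p(t))$ or the candidate regions $\{W_{a^*},W_{b^*}\}$ fail to strictly lower the pairwise cost, the fact that $p(t+1)\neq p(t)$ forces $(i,j)\in\E(p(t))$ together with
$$
\Hone(\Cd(W_{a^*});W_{a^*})+\Hone(\Cd(W_{b^*});W_{b^*})<\Hone(\Cd(p_i(t));p_i(t))+\Hone(\Cd(p_j(t));p_j(t)),
$$
while $p_i(t+1)=W_{a^*}$, $p_j(t+1)=W_{b^*}$, and $p_k(t+1)=p_k(t)$ for every $k\notin\{i,j\}$.

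Next I would confirm that $p(t+1)\in\ConnPart$, so that $\Hexp(p(t+1))$ is meaningful. Proposition~\ref{prop:Wconnected} gives $W_{a^*},W_{b^*}\in\powerset{Q}$; they are nonempty because $a^*\in W_{a^*}$ and, as $a^*\neq b^*$ by the definition of $P_{u}$, also $b^*\in W_{b^*}$; and by construction $W_{a^*}\cap W_{b^*}=\emptyset$ and $W_{a^*}\cup W_{b^*}=u=p_i(t)\cup p_j(t)$. Since all remaining components coincide with those of $p(t)$, the four conditions of Definition~\ref{def:ConPartitions} hold for $p(t+1)$.

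Finally, writing $\Hexp(p)=\sum_{k=1}^{N}\Hone(\Cd(p_k);p_k)$ and cancelling the $N-2$ terms indexed by $k\notin\{i,j\}$,
$$
\Hexp(p(t+1))-\Hexp(p(t))=\Hone(\Cd(W_{a^*});W_{a^*})+\Hone(\Cd(W_{b^*});W_{b^*})-\Hone(\Cd(p_i(t));p_i(t))-\Hone(\Cd(p_j(t));p_j(t)),
$$
which is strictly negative by the displayed inequality above; hence $\Hexp(p(t+1))<\Hexp(p(t))$. The only point that needs care is the first step: one must be precise that the map $T$ leaves the partition fixed unless the strict-improvement test is passed, so that a genuine transition $p(t+1)\neq p(t)$ can occur only together with that strict inequality. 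Granting this, the rest is just the additivity of $\Hexp$ over components and the fact, from Proposition~\ref{prop:Wconnected}, that the updated regions are admissible.
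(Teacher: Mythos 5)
Your proposal is correct and follows essentially the same route as the paper: decompose $\Hexp$ as a sum of one-center costs, cancel the $N-2$ unchanged terms, and invoke the strict-improvement test that the algorithm requires before modifying $p_i$ and $p_j$. Your additional checks (that a genuine transition forces the test to have passed, and that $W_{a^*},W_{b^*}$ yield a valid connected partition via Proposition~\ref{prop:Wconnected}) are details the paper leaves implicit but do not change the argument.
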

\begin{proof}
Without loss of generality assume that $(i,j)$ is the pair selected at time $t$. For simplicity let us denote $p(t+1)$ and $p(t)$, respectively, by $p^+$ and $p$.
Then
\begin{align*}
&\Hexp(p^+)- \Hexp(p)\\
&\qquad\qquad\qquad=  \Hone(\Cd(p_i^+); p_i^+)+ \Hone(\Cd(p_j^+); p_j^+)\\
&\qquad\qquad\qquad\qquad-(\Hone(\Cd(p_i); p_i)+ \Hone(\Cd(p_j); p_j)).
\end{align*}
According to the definition of the Gossip Optimal Graph Coverage Algorithm we have that if $p_i^+\neq p_i$, $p_j^+\neq p_j$, then 
\begin{align*}
&\Hone(\Cd(p_i^+); p_i^+)+ \Hone(\Cd(p_j^+); p_j^+)\\
&\qquad\qquad <\Hone(\Cd(p_i); p_i)+ \Hone(\Cd(p_j); p_j)
\end{align*}
from which the thesis follows.
\end{proof}

\begin{figure*}[t]
\centering
\subfigure
{
    \includegraphics[width=2.0in]{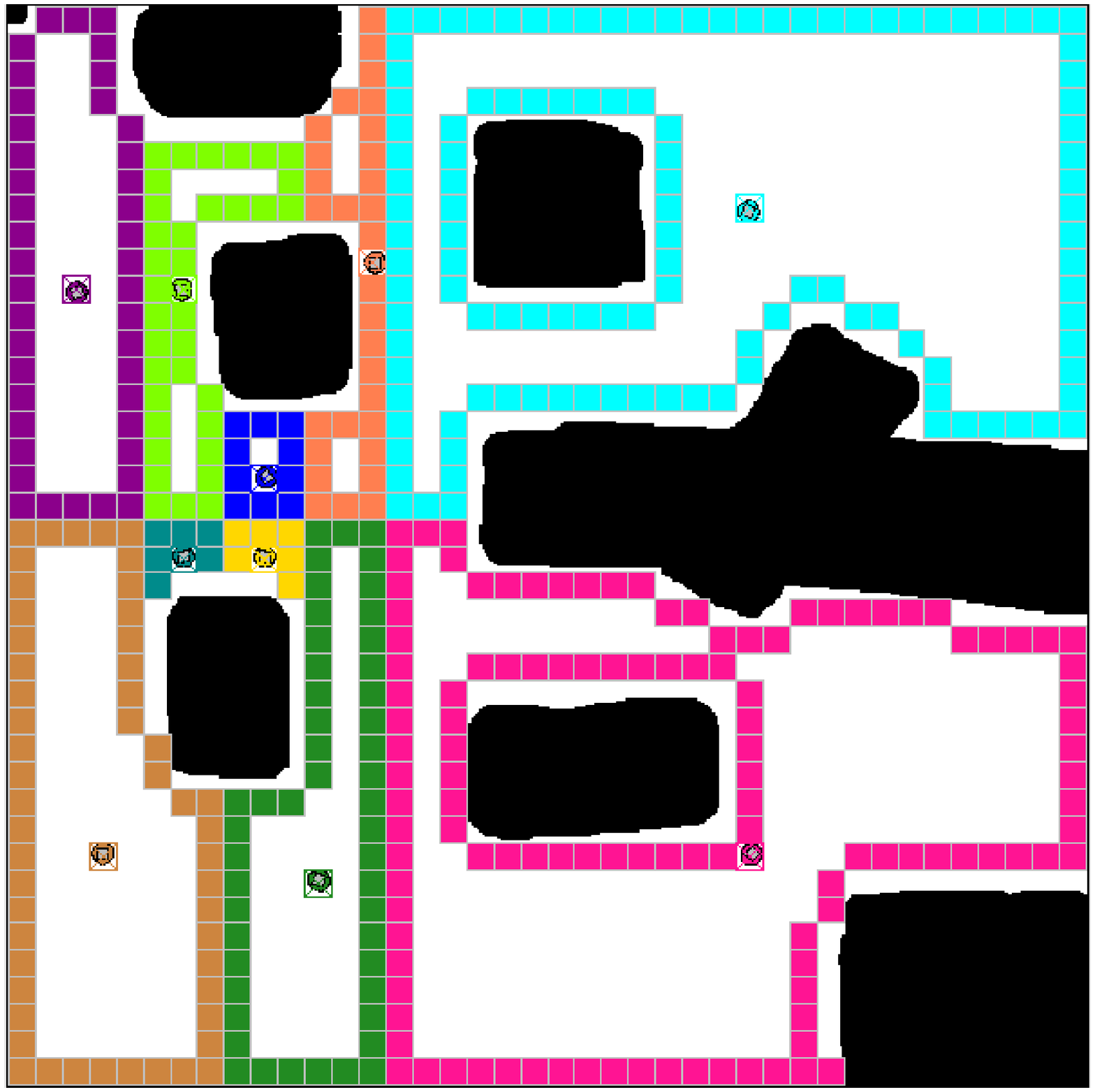}
    \label{fig:cave_initial}
}
\subfigure
{
    \includegraphics[width=2.0in]{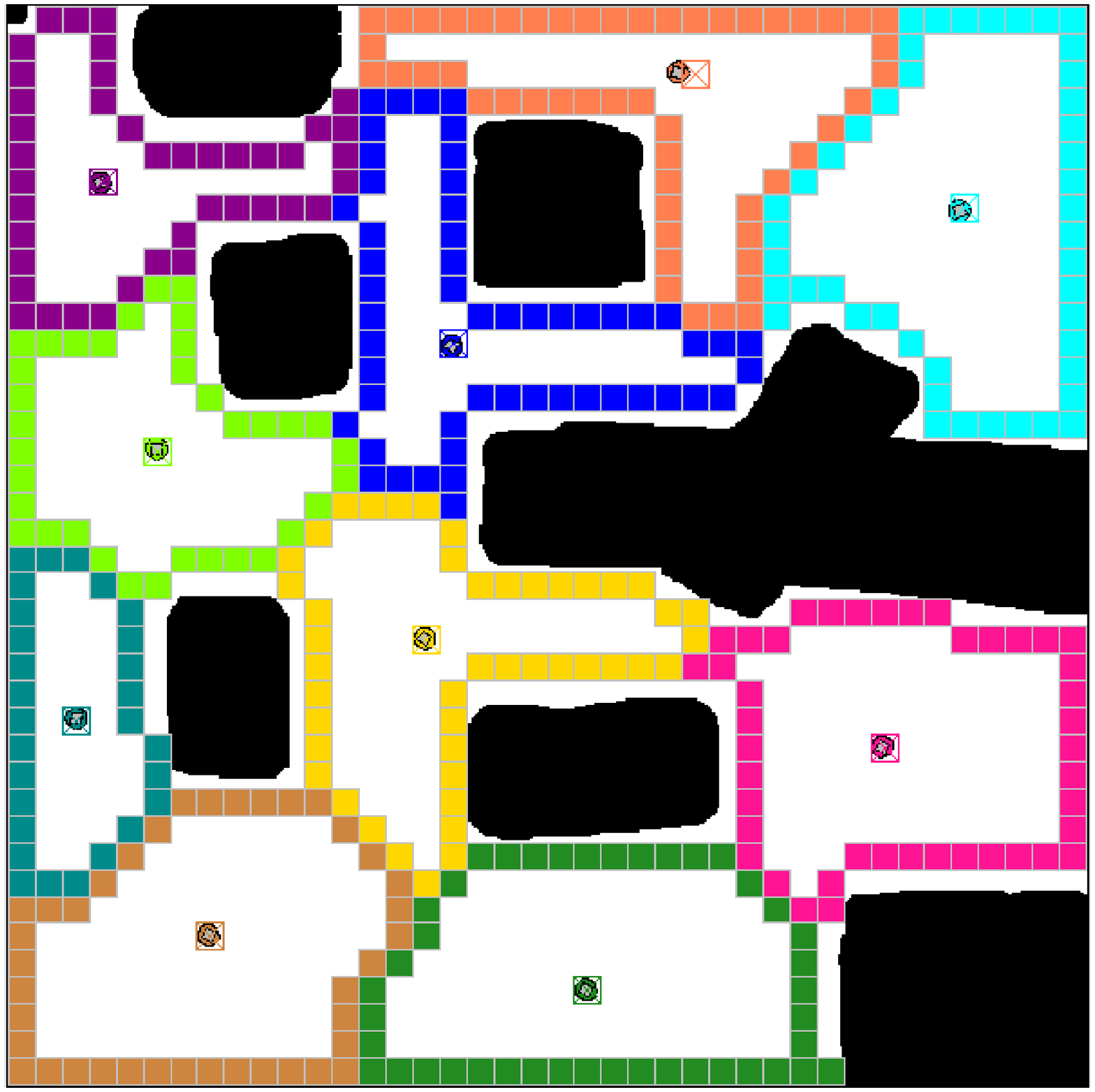}
    \label{fig:cave_lloyd}
}
\subfigure
{
    \includegraphics[width=2.0in]{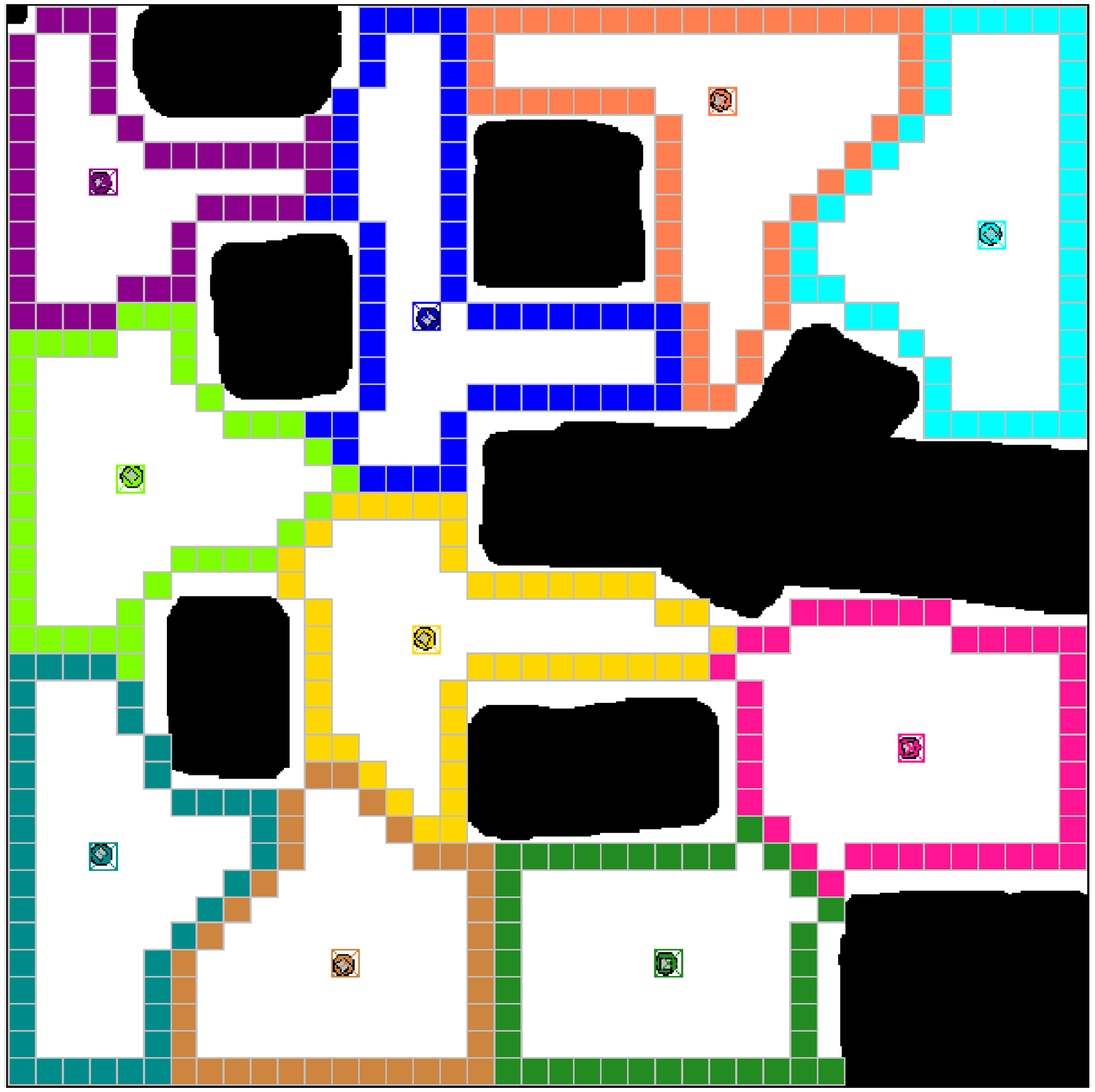}
    \label{fig:cave_optimal}
}
\caption{At left, ten robots are positioned at the centroids of their initial
partitions in a non-convex environment.  The boundary of 
each agent's partition is drawn in a different color.  The middle shows the final solution of
a centralized Lloyd optimization with a total cost of $624.1m$.  At right is a final solution for the pairwise-optimal discretized gossip coverage algorithm starting from the final
centralized Lloyd solution, with an improved cost of $610.3m$.}
\label{fig:sim_cave}
\end{figure*}

Now we can prove the main result of this paper.

\begin{proof}[Theorem \ref{th:T-persistent-gossipOpt}]
The proof of point (i) of the Theorem is based on 
on verifying the assumptions (i), (ii), (iii), (iv) of Theorem~4.3 from~\cite{PF-RC-FB:08u-arxiv}.  

We start with the following topological consideration.
Let $A,B\subset Q$, $A\Delta B$ be their symmetric difference, and let
$\card{A}$ be the ``points counting measure" of $A$, that is the number of
elements of $A$. 
Then if we define $d_\Delta(A,B)=\card{A\Delta B},$ we
have that $d_\Delta$ is a distance on the set of the subsets of $Q$, which we
denote $2^Q$. 
Since the points counting measure takes integer values, $2^Q$ a discrete topological space.
Next, consider the product
space $(2^Q)^N$ and let $q=\left\{q_1,\ldots,q_N\right\}$ and
$\bar{q}=\left\{\bar{q}_1,\ldots,\bar{q}_N\right\}$ be two its
elements. Then, thanks to results valid for product topological spaces, we
have that the function
$d_\Delta^{(N)}(q,\bar{q})=\sum_{i=1}^Nd\left(q_i,\bar{q}_i\right)$ is a
distance on $(2^Q)^N$. Hence, $(2^Q)^N$ is also a discrete topological
space. Observe that $\ConnPart$ is a subset of $(2^Q)^N$. Since $2^Q$,
$(2^Q)^N$ and $\ConnPart$ are finite, they are also trivially compact.
Moreover, since the algorithm $\setmap{T}{\ConnPart}{\ConnPart}$ is
well-defined, we have that $\ConnPart$ is strongly positively
invariant. This fact implies that assumption (i) of Theorem~4.3 from~\cite{PF-RC-FB:08u-arxiv} is satisfied.

From Lemma~\ref{prop:Tdecr} it follows that assumption (ii) is also satisfied, with $\Hexp$ playing the role of the function $U$.  

Next, the fact that all maps from a discrete space are continuous implies the continuity of the maps $T_{ij}$ and $\Hexp$, thus guaranteeing the validity of assumption (iii). Assumption (iv) holds true for the hypothesis made in the statement of Theorem~\ref{th:T-persistent-gossipOpt} that $\sigma$ is a uniformly persistent map.
Hence we are in the position to apply Theorem~4.3 from~\cite{PF-RC-FB:08u-arxiv}, and conclude convergence to the intersection of the equilibria of the maps $T_{ij}$, which, according to the definition of $T_{ij}$ coincides with the set of the pairwise-optimal partitions.
Hence, since that set is finite, we can argue that the system converges in finite time to one pairwise-optimal partition.

A proof of (ii) in Theorem~\ref{th:T-persistent-gossipOpt} can be made in a similar way using Theorem~4.5 from~\cite{PF-RC-FB:08u-arxiv} inplace of 4.3.
\end{proof}

\section{Scalability Properties}
\label{sec:computation}

In this Section we discuss how to compute the optimal new centroids $(a^*,b^*) \in P_{p_i\cup p_j}$ for the Gossip Optimal Graph Coverage Algorithm, as well as a sampling method to greatly reduce the computational complexity.  Determining $(a^*,b^*)$ requires an exhaustive search over all possible pairs of vertices for a pair with the lowest cost to cover $p_i\cup p_j$.  Using Johnson's all pairs shortest paths algorithm as a first step to compute a pairwise distance matrix for $p_i\cup p_j$, $(a^*,b^*)$ can be found in \bigO{|p_i|^3} with a memory requirement of \bigO{|p_i|^2} (as $p_i\cup p_j$ is of order $|p_i|$).  

For mobile robots with limited onboard resources, these 
requirements may be too steep.  In such circumstances, we propose instead
to sample pairs of potential new centroids when applying the algorithm.  The first
sample pair would be the agents' previous centroids, with the rest drawn at
random from the set $P_{p_i\cup p_j}$.  For $m$ sample pairs, this approach requires $2
m$ calls to Dijkstra's one-to-all shortest paths algorithm to find the pair
with the lowest cost, for a total time complexity of \bigO{m |p_i|
  \log(|p_i|)} and a memory requirement of \bigO{|p_i|}.  It is also worth
noting that if all edge weights in $G$ are equal (i.e., for a regular
grid discretization), then a breadth-first-search approach can replace
Dijkstra and the time complexity drops to \bigO{m |p_i|}.  This sample-based approach
greatly reduces the time and memory requirements, and will still
converge to a pairwise-optimal partition.

\section{Simulation Results}
\label{sec:simulations}

To demonstrate the utility of the proposed gossip coverage algorithm, we implemented it and other coverage algorithms using the open-source Player/Stage robot software system \cite{BG:08b} and the Boost Graph Library (BGL) \cite{JBS-LQL-AL:07}.  All results presented here were generated using Player 2.1.1, Stage 2.1.0, and BGL 1.34.1.  A non-convex environment was specified with a bitmap and overlaid with a $0.1m$ resolution occupancy grid, producing a lattice-like graph with all edge weights equal to $0.1m$.  To compute distances on graphs with uniform edge weights we extended the BGL implementation of breadth-first-search with a distance recorder event visitor.

Figure~\ref{fig:sim_cave} provides one example of how convergence to the set of pairwise-optimal partitions represents an improvement over Lloyd-type methods.  On the left is the non-convex environment used for all results in this Section, as well as an initial partition for $10$ robots.
The middle panel shows the result of a centralized Lloyd optimization where all agents iteratively update their centroid and partition based on the centroids of all other agents.  The final equilibrium for this centralized method is a centroidal Voronoi partition with a total cost of $624.1m$. However,
this solution is not a pairwise-optimal partition.  We started the pairwise-optimal discretized gossip coverage algorithm using this partition as the initial condition.  After $70$ pairwise
exchanges between random agent pairs, the algorithm reached the pairwise-optimal
partition at right which has a lower cost of $610.3$.  The best known $10$-partition of this map has cost $610.0$.

\begin{figure}[t]
\centering
\psfrag{Simulation count}{Simulation count}
\psfrag{Final total cost (m)}{Final total cost ($m$)}
\includegraphics[width=0.45\textwidth]{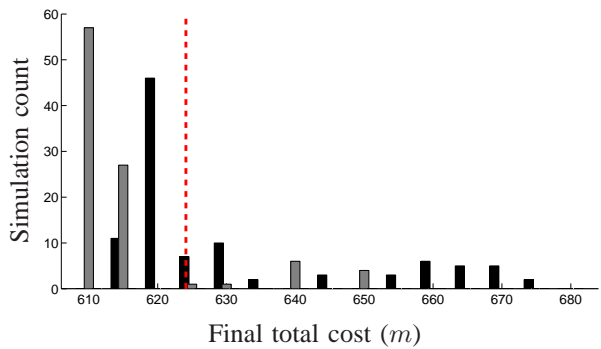}
\caption{Histograms of final total costs for $100$ simulations of pairwise-optimal (gray) and Lloyd-type (black) gossip coverage.  All runs started from the initial partition in Fig.~\ref{fig:sim_cave} with different random sequences of agent pairs.  The dashed red line shows the final cost using a centralized Lloyd algorithm.}
\label{fig:histogram}
\vspace{-0.1in}
\end{figure}

Figure~\ref{fig:histogram} compares $100$ simulations of the Lloyd-type and pairwise-optimal discretized gossip coverage algorithms.  We started
each simulation from the initial partition in Fig.~\ref{fig:sim_cave} and used
different random sequences of agent pairs for each case.  The new pairwise-optimal algorithm shows marked
improvement over the Lloyd-type gossip algorithm, achieving a total cost within $2\%$
of the best known cost in $85\%$ of trials, while also avoiding the worst local minima.

\begin{figure}[t]
\centering
\psfrag{Simulation count}{Simulation count}
\psfrag{Final total cost (m)}{Final total cost ($m$)}
\vspace{-0.25in}
\hspace{-0.10in}
\includegraphics[width=0.50\textwidth]{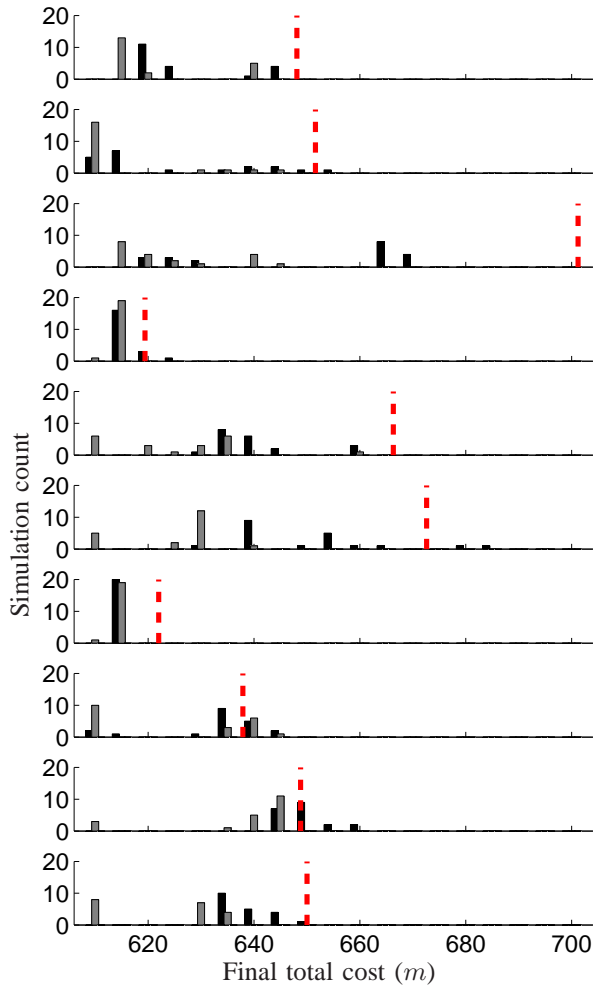}
\vspace{-0.5in}
\caption{Histograms of final costs for simulations starting from $10$ random initial conditions for the environment in Fig.~\ref{fig:sim_cave}.  The pairwise-optimal (gray) and Lloyd-type (black) gossip coverage  algorithms each ran $20$ times per initial condition using different random sequences of agent pairs.  The dashed red line shows the final cost using a centralized Lloyd algorithm.}
\label{fig:multicompare}
\vspace{-0.1in}
\end{figure}

 Figure~\ref{fig:multicompare} compares final cost
  histograms for $10$ random initial conditions for the same environment.
  Each initial condition was created by selecting unique starting locations
  for the agents uniformly at random, and using these locations to generate
  a Voronoi partition.  The histograms compare $20$ simulations of
  Lloyd-type and the new pairwise-optimal gossip coverage starting from the
  same initial partition but with different random orders of exchanges.
  The centralized Lloyd final cost for each initial condition is also
  shown.  The new pairwise-optimal method outperforms both Lloyd-type
  methods for all $10$ tests, although the Lloyd-type gossip method is
  close in two of the trials.  These results illustrate that convergence
  to a pairwise-optimal partition represents a significant performance
  enhancement over classic Lloyd methods.  Interestingly, the Lloyd-type
  gossip algorithm also substantially outperforms the centralized version
  in $8$ of the trials.  We speculate this is due to the gossip method
  taking trajectories through the space of connected $N-$partitions which
  are not possible for the centralized approach.

\addtolength{\textheight}{0.00cm}   

\section{Conclusions}
\label{sec:conclusions}

We have presented a novel distributed coverage control algorithm which requires
only pairwise communication between agents.
The classic Lloyd approach to optimizing quantizer placement involves iteration
of separate centering and Voronoi partitioning steps.
In the graph coverage domain, the separation of centroid and partition optimization 
seems unnecessary computationally for gossip algorithms.  More importantly, we have
shown that improved performance can be achieved without this separation.  
Our new pairwise-optimal discretized gossip coverage algorithm provably
converges to a subset of the centroidal Voronoi partitions
which we labeled the set of pairwise-optimal partitions.
Through numerical comparisons we demonstrated that this new subset of solutions
avoids many of the local minima in which both gossip and centralized Lloyd-type algorithms
can get stuck.

\bibliographystyle{ieeetr}
\bibliography{alias,Main,FB}

\end{document}